\definecolor{darkorange}{rgb}{1.0, 0.55, 0.0} 
\DeclareMathOperator*{\argmin}{arg\,min}
\newtheorem{lemma}{Lemma}[section]
\begin{document} 

\title{Q-Match: Iterative Shape Matching via Quantum Annealing\vspace{-8pt}} 

\author{
\hspace{22pt}
\begin{tabular}{ccc}
    Marcel Seelbach Benkner$^{1}$ $\quad$ & Zorah L\"{a}hner$^{1}$ $\quad$ & Vladislav Golyanik$^{2}$ $\quad$ \vspace{5pt}\\
    Christof Wunderlich$^{1}$ $\quad$ & Christian Theobalt$^{2}$ $\quad$ & Michael Moeller$^{1}$ $\quad$  
    \end{tabular} 
    \vspace{11pt}\\
    $^{1}$University of Siegen $\quad\quad\quad$ $^{2}$MPI for Informatics, SIC 
} 

\makeatletter
\let\@oldmaketitle\@maketitle%
\renewcommand{\@maketitle}{\@oldmaketitle%
  \myfigure{}\bigskip}%
\makeatother

\newcommand\myfigure{%
  \centering
    (i)
    \begin{overpic}[width=.1\linewidth]{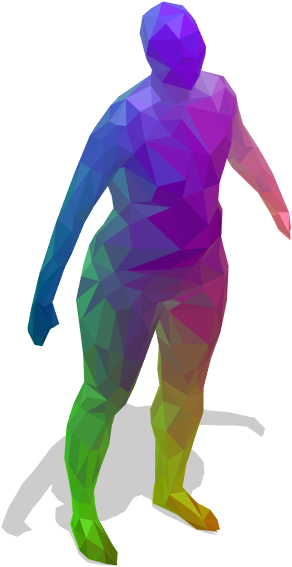} 
        \put(40,110){Q-Match}
    \end{overpic}
    \begin{overpic}[width=.1\linewidth]{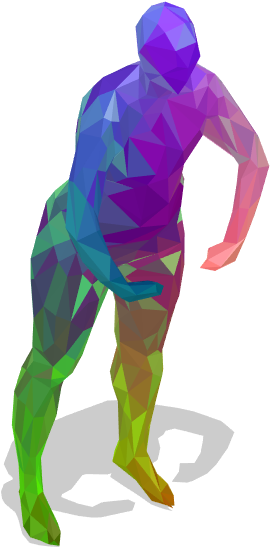} 
        \put(90,105){QGM \cite{seelbach20quantum}}
    \end{overpic}
    \includegraphics[width=.1\linewidth]{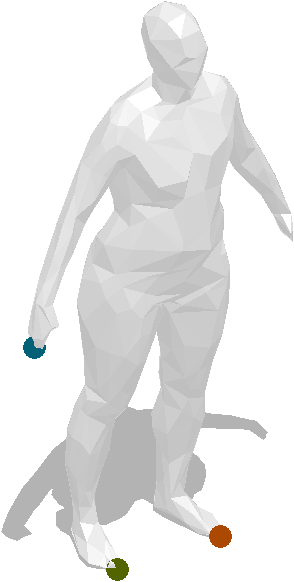} 
    \includegraphics[width=.1\linewidth]{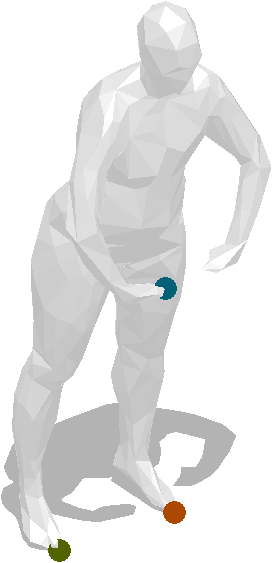} 
    (ii)
    \includegraphics{FIGURES/cyclic-alpha.tikz}
    \captionof{figure}{(i) Example correspondences obtained by two quantum approaches, the proposed \textit{Q-Match} (left, $502$ points) and QGM \cite{seelbach20quantum} (right, three points). 
    Vertices with the same color are mapped to each other. Our iterative method operates directly in the space of permutations and thus can deal with many more vertices compared to QGM. 
    (ii) We represent permutations through a collection of cycles, which allows us to parametrize them through binary variables without the need to enforce any kind of constraints (upper right). 
    \textit{Q-Match} makes the decision whether each cycle $C_i$ should be applied ($\alpha_i=1$) or not ($\alpha_i=0$) to progress towards a lower energy solution.
    }
    \label{fig:teaser}
  }

\maketitle

\begin{abstract} 
Finding shape correspondences can be formulated as an $\mathcal{NP}$-hard quadratic assignment problem (QAP) 
that 
becomes infeasible for shapes with high sampling density. 
A promising research direction is to tackle such quadratic optimization problems over binary variables with \textit{quantum annealing}, which allows for some problems a more efficient search in the solution space. 
Unfortunately, enforcing the linear equality constraints in QAPs via a penalty significantly limits the success probability of such methods on currently available quantum hardware. 
To address this limitation, this paper proposes Q-Match, \textit{i.e.,} a new iterative quantum method for QAPs inspired by the $\alpha$-expansion algorithm, which allows solving problems of an order of magnitude larger than current quantum methods. 
It implicitly enforces the QAP constraints by updating the current estimates in a cyclic fashion. 
Further, Q-Match can be applied 
iteratively, on a subset of well-chosen correspondences, allowing us to scale to real-world problems. 
Using the latest quantum annealer, the D-Wave Advantage, we evaluate the proposed method on a subset of QAPLIB as well as on isometric shape matching problems from the FAUST dataset. 
\vspace{-37pt} 
\end{abstract} 

\section{Introduction}\label{sec:introduction} 

Shape correspondence is at the heart of many computer vision and graphics applications, as knowing the relationship between points allows transferring  information to new shapes. 
While this problem has been around for a long time \cite{loncaric98survey}, the setting with non-rigidly deformed shapes is still challenging \cite{dyke19shrec}. 
One reason is that generic 
matching of $n$ points in two shapes can be formulated as an $\mathcal{NP}$-hard Quadratic Assignment Problem (QAP) 
\begin{equation} 
\min_{X\in \mathbb{P}_n} \; E(X ):= \mathbf{x} ^{\text{T}}W \mathbf{x}    , \label{eq:DASProblem}
\end{equation}
where $\mathbf{x} = \operatorname{vec}(X)$ is the $n^2$-dimensional vector corresponding to a matrix in the set of permutations $\mathbb{P} \subset \{ 0, 1 \}^{n\times n}$ and  
$W \in \mathbb{R}^{n^2 \times n^2}$ is an energy matrix describing how well certain pairwise properties are preserved between pairs of matches. 
For some choices of $W$ and assumptions on the input shapes, \eqref{eq:DASProblem} can become feasible by adding prior knowledge and exploiting geometric structures. 
For example, rigid shapes can be aligned through rotation and translation only --- a problem with only six degrees of freedom. 
For general cases, QAPs remain challenging nonetheless. 

In this work, we propose a new method for shape matching that  employs \textit{quantum annealing}, finds high quality solutions of QAPs with high probability, and is suitable for real-world problems. 
In recent years, quantum computers have made significant  progress and machines such as the IBM Q System One (circuit model) and D-Wave 
Advantage System $1.1$ (adiabatic, quantum annealer) became available for research purposes.
The circuit-based and adiabatic 
models 
are polynomially equivalent to each other 
in theory 
\cite{aharonov2008adiabatic}. However, the current implementations differ substantially and have different challenges to overcome. 
One advantage of adiabatic quantum computing %
is that it is less sensitive to noise and decoherence  \cite{childs2001robustness}. 
Unfortunately, adiabatic quantum computers (AQCer) can a-priori only solve unconstrained problems and thus cannot directly solve QAPs \eqref{eq:DASProblem} which are constrained to permutation matrices. %
To overcome this limitation, a recent method \cite{seelbach20quantum}
adds a penalty term enforcing the solution to be a permutation. 
However, this reduces the solvable problem sizes to very small $n \leq 4$ and,
in practice, the probability of finding the globally optimal solution drops to random guessing. 
In contrast, we propose a new formulation for AQCing that is guaranteed to result in permutation matrices 
in problem sizes in the order of magnitude of 
available (fully connected) qubits. See  Fig.~\ref{fig:teaser} for the method overview and Table  \ref{tab:notions} for a list of acronyms we use. 
Our method is iterative, and we solve a series of quadratic unconstrained binary optimization (QUBO) problems on D-Wave which were prepared on a classical CPU. 
In summary, the \textbf{contributions} of this paper are as follows: 
\begin{itemize} [leftmargin=*]\itemsep0em
    \item Q-Match, \textit{i.e.,} a new scalable quantum approach for searching correspondences between two shapes related by a non-rigid transformation (isometry or near-isometry). 
    \item An iterative problem formulation through a variant of $\alpha$-expansion which avoids explicit linear constraints for permutations and can be applied to large problem instances. 
\end{itemize} 

We evaluate our method on a real AQCer, D-Wave Advantage system $1.1$, and obtain improved results compared to the previous quantum method \cite{seelbach20quantum}. 
Moreover, Q-Match \textit{performs on par with the classical matching approach    \cite{ovsjanikov2012functional}}. 
This paper does not assume prior knowledge of quantum computing from readers. 
Secs.~\ref{ssec:quantum_annealing} and \ref{ssec:programming_AQC} review %
preliminaries required to understand, implement and apply the proposed Q-Match approach. 
The source code is available at  \url{https://4dqv.mpi-inf.mpg.de/QMATCH/}. 

\begin{table} 
\centering 
\small
\begin{tabular}{|c|p{160pt}|} 
\hline 
\textbf{acronym}  & $\quad\quad\quad\quad\quad\quad\;\;$\textbf{meaning} \\ 
\hline 
AQCer/ing           & adiabatic quantum computer/ing \\
QPU           & quantum processing unit \\
QAP           & quadratic assignment problem \\
QUBO          & quadratic unconstrained binary optimization \\ 
\hline 
\end{tabular} 
\caption{Commonly used acronyms and their meaning.} 
\label{tab:notions} 
\end{table}

\section{Related Work} \label{sec:relatedwork} 

This section reviews prior work on the intersection of quantum computing, computer vision and shape matching. 
For the foundations of AQCing, 
see Sec.~\ref{ssec:quantum_annealing} and  \cite{KadowakiNishimori1998, Farhi2001}. 

\noindent\textbf{AQCing for Computer Vision:} 
Fueled by the progress in quantum hardware accessible for a broad research community \cite{DWave_Leap, IBMQ}, a growing interest is developing to apply quantum computing or its concepts to computer vision. 
Ways for representing, retrieving and  processing images on a quantum computer have been  extensively investigated in the theory literature  \cite{VenegasAndraca2003, CaraimanManta2012, Sun2013,  Yan2016}. 
Methods for image recognition and classification are among the first low-level techniques evaluated on a real AQCer \cite{Neven2012, Boyda2017, Nguyen2018,  Cavallaro2020, LiGhosh2020}. 
O'Malley \textit{et al.}~\cite{OMalley2018} learn facial features and reproduce image  collections of human faces with the help of AQCing. 
Cavallaro \textit{et al.}~\cite{Cavallaro2020} classify multi-spectral images with an  ensemble of quantum support vector machines \cite{Willsch2020}. 
To account for the limited connectivity of the physical qubit graph of D-Wave 2000Q, they split the training set into multiple disjoint subsets and train the classifier on each of them  independently. %
Li and Ghosh \cite{LiGhosh2020} eliminate false positives in  multi-object detection on D-Wave 2X using the QUBO formulation from \cite{RujikietgumjornCollins2013}. 
It provides state-of-the-art accuracy and the implementation with quantum annealing is faster than greedy and tabu search. 
Solving image matching problems with a quantum annealer was first theoretically studied in \cite{neven2008image}. 
Practically applicable quantum algorithms for the absolute orientation problem and  point set alignment have been introduced in \cite{golyanik2020quantum}. 
The work shows that representing rotation matrices using a linear basis allows mapping both problems to QUBOs. 
A new point set alignment method for gate model quantum computers has  been recently derived in \cite{Noormandipour2021arXiv}, 
inspired by the earlier kernel correlation approach  \cite{TsinKanade2004}. 
QGM \cite{seelbach20quantum} is the first %
implementation of graph matching for small problem instances as a single QUBO with a penalty approach, 
whereas QuantumSync \cite{QuantumSync2021} is the first method for  permutation synchronization developed for and tested on an AQCer (Advantage system $1.1$). 
In \cite{seelbach20quantum}, the formulation of permutation matrix constraints with a linear term leads to low probabilities of measuring globally optimal solutions in a single annealing cycle on a modern AQCer. 
In contrast to \cite{seelbach20quantum, QuantumSync2021}, we avoid explicit constraints ensuring valid permutations and use a series of QUBOs leading %
successive improvements in the energy.
Consequently, we can align significantly larger graphs and shapes. 

\noindent\textbf{Shape Correspondence:} 
Finding point-wise matches between meshes is an actively studied problem in vision and graphics. 
One common way to establish matches between shapes related by isometry is to compare shape functions or signatures. 
Multiple methods of this category analyze spectra of the Laplace-Beltrami operator on the shape surfaces which are invariant under isometric deformations \cite{sun09hks, Aubry2011, ovsjanikov2012functional, Salti2014, Halimi2019}. 
Functional maps \cite{ovsjanikov2012functional, Halimi2019} interpret the problem as the alignment of functions in the pre-defined basis 
(\textit{e.g.,}  
eigenfunctions of the Laplace-Beltrami operator). 
Post-processing is required to extract point-wise matches. 
Among the shape descriptors, wave kernel signature (WKS) \cite{Aubry2011} is inspired by quantum physics. 
It relies on solving the Schrödinger equation for the dynamics (dissipation) of quantum-mechanical particles on the shape surface. 
WKS can resolve fine  details 
and is robust to moderate non-isometric  perturbations. 
Recently, convolution operators were generalized to non-Euclidean structures, and the availability of large-scale shape collections enabled supervised learning of dense shape correspondences \cite{Masci2015, Monti2017, Litany2017}. 

The point-wise correspondence search between two shapes can be formulated as a linear (LAP) or quadratic assignment problem (QAP) over the space of permutations, with the matching costs computed relying on feature descriptors \cite{sun09hks, Aubry2011, Salti2014}. 
The solution space of both problems is exponential in the number of points. 
For LAP, there exists a fast auction algorithm with polynomial runtime \cite{bertsekas98vt}. 
Per construction, LAP formulations for matching are sensitive to surface noise and do not explicitly consider spatial point relations, which often leads to local minima (and, consequently, inaccurate solutions). 
QAPs \cite{koopmans57qap, lawler1963qap} add quadratic costs for matching point pairs and regard point neighborhoods; the solutions are spatially smoother. 
The downside is their $\mathcal{NP}$-hardness, which makes finding global optima for large inputs unfeasible. 
Multiple policies to solve QAPs efficiently have been proposed in the literature, such as branch-and-bound \cite{Roucairol1987, Hahn1998}, spectral analysis \cite{leordeanu05spectral}, alternating direction method of multipliers  \cite{lehuu2017adgm}, entropic regularization of the energy landscape  \cite{Solomon2016} and simulated annealing \cite{holzschuh20simanneal}. 
Convex relaxations are among the most thoroughly investigated policies for QAPs \cite{Zhao1997, Anstreicher2001, Povh2009, Fogel2013, Kezurer2015, Bernard2018}. 
These methods either have prohibitive worst-case runtime complexity (branch-and-bound), rely on heuristic algorithmic choices (\textit{e.g.,} entropic regularization and simulated annealing), or do not guarantee global optima. 
In contrast, we address QAPs with a new AQCing metaheuristic to find high quality solutions with a high probability.

Holzschuh \textit{et  al.}~\cite{holzschuh20simanneal} proposed a non-quantum algorithm closely related to ours. 
It can be seen a special case of our setup that only applies one $2$-cycle per iteration. 
This means a considerably smaller step size compared to Q-Match and, hence, more iterations until convergence and higher sensitivity to local optima. 
Several heuristics to gain robustness and speed are further used in \cite{holzschuh20simanneal}. 
First, the algorithm is based on simulated annealing and accepts worse states with a certain probability to step over local optima. 
In contrast, our Q-Match explores larger solution spaces simultaneously 
and converges in fewer iterations. 
Second, their hierarchical optimization scheme adds new matches based on geodesic embeddings whereas we can refine any given initialization. 

\section{Background}\label{sec:background}

\subsection{Adiabatic Quantum  Annealing (AQC)ing}\label{ssec:quantum_annealing} 
The seminal work of Kadowaki and Nishimori  \cite{KadowakiNishimori1998} introduced quantum annealing. 
The authors argued that a quantum computer could be built based on the principle of finding the ground state of the Ising model under quantum fluctuations (causing state transitions) induced by a transverse magnetic field. 
The theoretical foundation of such a computational machine is  grounded on the adiabatic theorem \cite{BornFock1928}. 
In follow-up work, Farhi \textit{et al.}~\cite{Farhi2001} 
tested the quantum annealing algorithm simulated on a classical computer on hard instances of an $\mathcal{NP}$-complete problem. 
Although it is \textit{not} believed that quantum annealing can solve $\mathcal{NP}$-complete problems in polynomial time, for rugged energy landscapes with high and narrow spikes quantum annealing can be faster than simulated annealing \cite{das2005quantum,PhysRevX.6.031015} and for some instances D-Wave scales better than the classical path integral Monte Carlo method \cite{king2021scaling}.

AQCing can solve \textit{quadratic  unconstrained binary optimization} (QUBO) problems which read 
\begin{align}
    \min_{s \in \{ -1, 1 \}^n} s^\top J s+ b^\top s,  \label{eq:acq} 
\end{align} 
where $s$ is a binary vector of unknowns, $J$ is a symmetric matrix of inter-qubit couplings, and $b$ contains qubit biases. To change the binary variable to $x \in \{ 0, 1 \}^n$ one simply needs to insert $s_i=2x_i-1$ everywhere in \eqref{eq:acq}.
Quantum computers operate with \textit{qubits}, \textit{i.e.,}  
quantum mechanical systems which can be modeled with normalized  vectors $\ket \phi$ in a Hilbert space $\mathbb{H}$. 
One writes $\ket \phi = \alpha \ket 0 + \beta \ket 1$ with $\alpha, \beta \in \mathbb{C}$ and  $\left|\alpha\right|^2 + \left|\beta\right|^2 = 1$, where $\ket 0, \ket 1$ denote an orthonormal basis.
In contrast to classical bits, which are either in state $0$ or $1$, quantum mechanical systems can be in a superposition $\alpha \ket 0 + \beta \ket 1$, where after measurement in the so-called computational basis ($\{ \ket 0 ,  \ket 1 \}$) one obtains $\ket 0 $ with probability $ \abs{\alpha}^2$ and $\ket 1 $ with probability $ \abs{\beta}^2$. 
A composite system of, \textit{e.g.,}, two qubits can be described with vectors from the tensor product of the individual Hilbert spaces $\ket{\psi} \in \mathbb{H}  \otimes \mathbb{H}$. If two qubits can be described independently from each other, we have  $\ket{\phi} \in \mathbb{H}$ and $\ket{\eta} \in \mathbb{H}$, and the two-qubit state $\ket{\psi}$ is described by the product state $\ket{\psi}= \ket{\phi} \otimes \ket{\eta} $. Since the state space of $n$ qubits is $2^n$ dimensional,  a classical computer would have problems even storing all the coefficients for $n=50$ \cite{nielsen2002quantum}. A second ingredient required for quantum computing, in addition to the superposition of qubit states, is the interference between different possible computational paths in the $2^n$-dimensional state space caused by interactions between qubits. Thus, during the course of a quantum algorithm \textit{entangled} (non-classically correlated) states such as $ \frac{1}{\sqrt{2}}(\ket{0}\otimes \ket{0} + \ket{1}\otimes \ket{1})$ are created that cannot be decomposed in the above form. Entanglement between quantum states is often considered a necessary resource for quantum computing \cite{Briegel2001}. At the end of a quantum algorithm the coefficients $\alpha$ and $\beta$ for each qubit are determined in a suitable measurement leaving the (adiabatic or circuit-model) quantum computer in a classical (non-entangled) state.

Another central notion in quantum mechanics are linear \textit{Hamilton operators}, acting on elements of $\mathbb{H}$, which are used to describe the static and dynamic properties of a quantum system with the Schrödinger equation. The eigenstates of a (time-independent) Hamiltonian are the stationary energy states of a quantum system. The eigenvalues give the corresponding possible values of the system's energy.   
The key idea for solving problems like \eqref{eq:acq} is to prepare a quantum system in a known state of lowest energy of a Hamiltonian $H_I$, most commonly a product state of the form, 
\begin{equation}\label{eq:initialisation} 
    \ket{\psi(t=0)}= \bigotimes_{i=1}^{n} \frac{1}{\sqrt{2}}\left( \ket{0} +\ket{1} \right), 
\end{equation} 
with ``$\bigotimes$'' denoting the Kronecker product over $n$ subsystems, and each qubit is prepared in a superposition. Next, one constructs a \textit{problem Hamiltonian} $H_P$ in such a way that the lowest energy state of $H_P$ is a solution of \eqref{eq:acq}. Finally, one slowly switches from the initial Hamiltonian $H_I$ to the final Hamiltonian $H_P$ with, \textit{e.g.,} a linear combination
\begin{equation}\label{eq:Hamiltonian_transition} 
	H(t) = \Big(1 - \frac{t}{ \tau}\Big)\,H_I +  \frac{t}{\tau}\,H_P. 
\end{equation} 
If the transition in \eqref{eq:Hamiltonian_transition} is sufficiently slow (\textit{i.e.,} if $\tau$ is sufficiently large), and if the state of lowest energy (ground state) of $H(t)$ remains unique, the adiabatic theorem \cite{BornFock1928} guarantees that the quantum system will remain in the ground state for all $t$. Thus, the solution to \eqref{eq:acq} can simply be \textit{measured} after the time evolution starting in an eigenstate of $H_I$ (that is typically easy to prepare) and ending in an eigenstate of $H_P$. The theoretical requirements for adiabatic evolution  can be made precise in terms of the size of the \textit{spectral gap} of $H(t)$, \textit{i.e.,} the difference between the smallest and second smallest eigenvalues of $H(t)$, during the time evolution.  Building a system to actually prepare a quantum system and evolve it in an adiabatic way in practice, however, remains highly challenging. A spectral gap that is too small will lead to excitation of higher energy states during the annealing process, thus preventing the system from ending up in the desired ground state of $H_P$. Next, we discuss some details regarding such practical realizations. 
\subsection{Algorithm Design and Programming the D-Wave Quantum  Annealer}\label{ssec:programming_AQC} 
Every AQCing algorithm includes six steps, \textit{i.e.,} QUBO  preparation, minor embedding, %
quantum annealing (sampling), unembedding,  
bitstring selection (in the case of multiple annealings) and solution interpretation. 

\paragraph{QUBO  preparation:} Note that many computer vision problems are naturally  formulated in forms other than a QUBO. 
A lot of research thus focuses on the QUBO preparation,  \textit{i.e.,} how to formulate a target problem as a QUBO mappable to an AQCer (Sec.~\ref{sec:method} is devoted entirely to the QUBO preparation for Q-Match). 
Note that this step is performed entirely on the CPU. 
In QUBOs, each binary variable is interpreted as one \textit{logical} qubit in the quantum context. 
Thus, the biases $b$ and couplings between the qubits $J$ define a graph of logical problem qubits. 

\textbf{Minor embedding} is the mapping of logical qubits to the grid of physical hardware qubits of the AQCer. 
Due to the limited connectivity between the physical qubits, 
each logical qubit often requires several physical qubits in the minor embedding. 
Physical qubits realizing a single logical qubit---and having as similar quantum states during annealing as possible---build a \textit{chain}. 
To find a minor embedding for two arbitrary graphs is an $\mathcal{NP}$-hard optimization problem on its own \cite{Cai2014}. 
However the problem becomes easier since the one graph is fixed by the hardware and heuristic methods such as \cite{Cai2014} can be used. 
The core criteria for an optimal minor embedding is jointly minimizing the chain length and the total number of required physical qubits. 
\textbf{Quantum annealing} is initiated, once the minor embedding is complete. It corresponds to the evaluation of the system with respect to a time-dependent Hamiltonian such as \eqref{eq:Hamiltonian_transition} during which the quantum system ideally remains in the ground state during the entire annealing, see  Sec.~\ref{ssec:quantum_annealing}. In practice, due to such factors as 1) the expected qubits lifetime (\textit{e.g.,} influenced by interaction with the environment, which cannot be entirely prevented), and 2) a small spectral gap of $H(t)$, multiple anneals are needed, each of which leads to the global optimum with some probability ${<}1$. After each annealing, an \textbf{unembedding} algorithm assigns measured values to the logical qubits.

The final solution over multiple annealings is chosen in the \textbf{bitstring selection} step based on the occurrence frequency or the resulting energies.
Finally, the solution is \textbf{interpreted} in the context of the original problem and returned to the user. 
In this regard, the interpretation involves transformation of the  measured bitstring to the initial data modality (\textit{e.g.,}  permutation matrices). 
\begin{wrapfigure}{l}{0.16\textwidth}
\vspace{-14pt} 
  \begin{center}
    \includegraphics[width=0.16\textwidth]{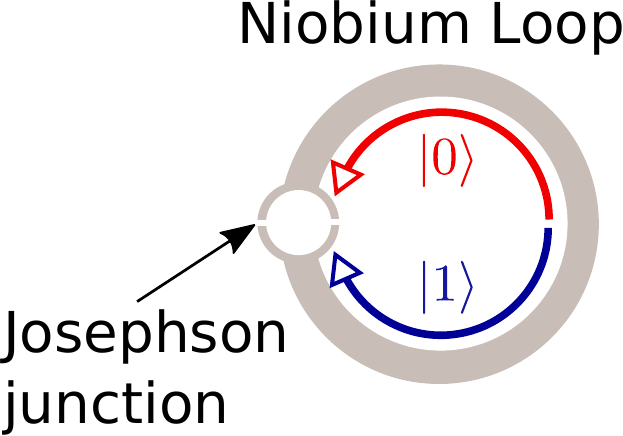}
  \end{center}
  \vspace{-10pt}
  \caption{Superconducting flux qubit.}
  \vspace{-17pt}
  \label{fig:niobium_loop} 
\end{wrapfigure}
D-Wave programming is done in Python, using the \textit{Leap 2} programming interface and remote access  tools \cite{DWave_Leap}. 
The D-Wave machine uses superconducting flux qubits \cite{mcgeoch2014adiabatic}. 
The qubits and the couplers are realized with loops of niobium, which have to be cooled to temperatures below 17mK. 
The current, that flows clockwise or anti-clockwise through the niobium loop can be modeled as a qubit, see Fig.~\ref{fig:niobium_loop}. 
Thus, it is possible to have a superposition between a current flowing clockwise and anticlockwise as long as no measurement is performed. 
The specified couplings $J$ and biases $b$ in \eqref{eq:acq} determine the time evolution of the magnetic fluxes during quantum annealing. 

\subsection{${\large \boldsymbol{\alpha}}$-Expansion}\label{ssec:alpha_expansion} 

The $\alpha$-expansion algorithm was introduced in \cite{boykov01graphcuts} to solve %
labeling optimization problems (\eg, for semantic segmentation or disparity estimation). A notable extension is \cite{FusionMove}, where the method is generalized to allow for parallel computation amongst other things.
In each iteration, a vector of binary variables $\alpha$ that correspond to two proposed values for the labels is computed so that the decision between the proposals improves the energy. \cite{boykov01graphcuts} finds the optimal expansion through a graph cut such that each pixel in an image is assigned to the first proposed label if $\alpha_i=1$ or the second proposed label if $\alpha=0$. 

Apart from this application, $\alpha$-expansion works particularly well in case the subproblem, \textit{i.e.,} the optimization for $\alpha$,
is submodular. Then the subproblem can be solved efficiently using graph cut techniques. 

\subsection{Cyclic Permutations}\label{ssec:cyclic_permutations} 
We define the set of permutation matrices as $\mathbb{P}_n = \{ X \in \{0,1\}^{n \times n}~|~ \sum_i X_{ij}=1,~\sum_j X_{ij}=1~ \forall i, j \}$.
A permutation matrix $X$ is called a \emph{k-cycle}, if there exist $k$ many disjoint indices $i_1, \hdots, i_k$ such that $X_{i_{1} i_k}=1$ and $X_{i_{j+1}i_j}=1$ for all $j=1, \hdots, k-1$, and $X_{ll} = 1$ for all $l \notin \{i_1, \hdots, i_k\}$, in which case $X = (i_1 ~ i_2 ~  \hdots ~ i_k)$ is a common notation. Two permutations $X$ and $X'$ are called \textit{disjoint} if $X_{ii}\neq 1$ implies $X'_{ii} = 1$ and $X'_{ii} \neq 1$ implies $X_{ii} = 1$. It is easy to show that disjoint permutations commute. Furthermore, it holds that any  $X$ can be written as the product of 2-cycles $c_i$, \textit{i.e.,} $X= \prod_{i=0}^N c_i$. Finally, we use the notation $X^1=X$ and $X^0 = I$ where $I$ is the identity for any matrix $X$.

\section{Method} \label{sec:method}

The main difficulty of previous quantum computing methods such as \cite{seelbach20quantum} for solving \eqref{eq:DASProblem} is handling the linear equality constraints arising from the optimization over permutation matrices. 
Instead of enforcing them via a penalty, the core idea of Q-Match is to minimize \eqref{eq:DASProblem} iteratively: We propose to use a variant of the $\alpha$-expansion algorithm that chooses $k$-many disjoint candidate cycles $c_i$ and uses quantum computing to solve the (still $\mathcal{NP}$-hard but smaller and unconstrained) subproblem of deciding whether to apply $c_i$ or not  (identified with $\alpha_i = 1$ and $\alpha_i=0$, respectively). 
In the following, we detail the idea of the cyclic $\alpha$-expansion as well as our proposed Q-Match algorithm.

\subsection{Cyclic ${\large \boldsymbol{\alpha}}$-Expansion} %
\label{subsec:cyclic_alpha_expansion} 

Let $C= \{c_1,..., c_m \}$ be a set of $m$ disjoint cycles. We consider the following optimization problem for an initial permutation $P_0$:
\begin{equation}
    \argmin_{ \{ P\in \mathbb{P}_n |  \exists  \mathbf{\alpha} \in  \{ 0,1\}^m: \ P = \left(\prod_i c_i^{\alpha_i} \right) P_0 \}  } E(P), \label{eq:IterationStep}
\end{equation}
where $E$ is defined in \eqref{eq:DASProblem} and 
$\alpha$ is a binary vector parametrizing $P$.  
As noted in Sec.~\ref{ssec:cyclic_permutations}, the order in the product of all $c_i^{\alpha_i}$ does not matter as disjoint permutations commute. %
\eqref{eq:IterationStep} would be equivalent to \eqref{eq:DASProblem} if $C$ was not fixed with disjoint cycles but could also be optimized. 

\vspace{-7pt}
\paragraph{Problem complexity:} The optimization problem in \eqref{eq:IterationStep} makes a binary decision for each cycle whether it should be applied.
In this section, we show that \eqref{eq:IterationStep} is an optimization problem with complexity dependent on the number of cycles and not $n$. 
We can convert the multiplication of cycles from \eqref{eq:IterationStep} into the following linear combination 

\begin{equation} 
P(\alpha) = P_0 + \sum_{i=1}^m \alpha_i(c_i - I) P_0,  \label{eq:Parametrization}
\end{equation} 
where $I$ is the identity. 
See the supplement for a proof.  
Next, we  use this representation to show that \eqref{eq:IterationStep} is a problem in the form of \eqref{eq:acq} with size $m$. Let $P,Q$ be two permutations and $E(Q, R) = \text{vec}(Q)^TW\text{vec}(R)$ which is linear in each component. 
We write $C_i = (c_i - I)P_0$ to simplify the notation.
Thus, to solve \eqref{eq:IterationStep}, we need to minimize 
\begin{equation*}\label{eq:E_expansion}  
\begin{aligned}
   &E( P_0 + \sum_i \alpha_i C_i, P_0 + \sum_j \alpha_j C_j ) \\
   &= E(P_0, P_0 + \sum_j \alpha_j C_j ) + \sum_i \alpha_i E(C_i, P_0+\sum_j \alpha_j C_j ) \\
   &= E(P_0, P_0) + \sum_j \alpha_j E(P_0, C_j) + \sum_i \alpha_i E(C_i,P_0) \\
   &+ \sum_i \sum_j \alpha_j \alpha_i E(C_i,C_j) = E(P_0, P_0 ) + \alpha^T \tilde{W}\alpha ,
\end{aligned}
\end{equation*} 

with
\begin{equation} 
\small
\tilde{W}_{ij} = \begin{cases}
E(C_i,C_j) & \text{if } i\neq j,\\
E(C_i,C_i) + E(C_i,P_0)+E(P_0, C_j) &\text{otherwise.}
\end{cases}
\label{eq:Couplings}
\end{equation} 
As $E(P_0,P_0)$ is constant w.r.t. $\alpha$, we are left with 
\begin{align}
     \min_{\alpha \in \{ 0,1 \}^m} \alpha^\top \Tilde{ W} \alpha, \label{eq:cyclicalpha}
\end{align}
which can be solved directly by AQCers. The interpretation of the optimal $\alpha$ is a binary indicator for whether to apply the corresponding cycle or not. With the basic methodology of \eqref{eq:cyclicalpha} being established, we proceed iteratively: Given a current estimate $P_{i-1}$ minimizing \eqref{eq:DASProblem}, we choose a set of disjoint cycles $C = \{ c_1, c_2, \dots, c_m \}$, optimize \eqref{eq:cyclicalpha}, and set $P_i = \left( \prod_j c_j^{\alpha_j}\right) P_{i-1}$. 

\subsection{Q-Match}\label{subsec:qmatch}

\begin{algorithm}[t]
\small
\SetKwInput{Input}{Input}
\SetKwInput{Output}{Output}
\SetKwRepeat{Do}{do}{while}
\DontPrintSemicolon

 \Input{Shapes $M, N$}
 \Output{Correspondences $P$} 
  Initialize $P$ (Sec.~\ref{ssec:initialization}) \\
  \Repeat{ $P$ does not change anymore }{ Calculate the $k$ worst vertices $I_M, I_N$ \eqref{eq:influence}\\
  Construct sub-matrix of worst matches $W_s$ (S.~\ref{subsec:subproblem})\\ 
  \Repeat{Every 2-cycle occurred in one set}{
  Choose random set of new 2-cycles  \\
  Calculate $\Tilde{W}$ (Sec.~\ref{subsec:subproblem} and \eqref{eq:Couplings})\\
  Solve the QUBO \eqref{eq:cyclicalpha} with AQCing\\
  Update permutation according to \eqref{eq:Parametrization}
  }
  Apply permutation to worst vertices in $P$ 
  }
  \Return{$P$}
 \caption{Q-Match}\label{alg:overview} 
\end{algorithm}

Cyclic $\alpha$-expansion can optimize any QAP up to a problem size that fits within the AQC, \textit{e.g.,} by iterating over random sets of cycles.
To solve larger problems of (isometric) shape correspondence between two 3D shapes $M$ and $N$, we propose \emph{Q-Match}. 
If both shapes are discretized with the same mesh of $n$ vertices, $W$ has the following form:
\begin{align}
    W_{i\cdot n+k,j\cdot n +l} = \vert d_M^{g}(i,j) - d_N^{g}(k,l) \vert \label{eq:isometricW}
\end{align}
where $i,j$ are vertices on $M$, $k,l$ are vertices on $N$, and $d^{g}(a,b)$ is the geodesic distance between two points on the same shape. 
Therefore, for two matches $(i,k), (j,l)$ $W_{i\cdot n+k,j\cdot n +l}$ describes how well the geodesic distance is preserved between the elements of this pair. 
The optimal solution is the correspondence for which the distance between all pairs of points is preserved the best, for isometries the optimal energy of \eqref{eq:DASProblem} is zero.
Since the geodesic distances have geometric meaning, this propagates into the QAP, and Q-Match leverages this to choose better sets of cycles. 
In Q-Match, the set of tested cycles is based on the contribution of each vertex to \eqref{eq:DASProblem} (Sec.~\ref{ssec:choosing_cycles}), we can solve subproblems iteratively (Sec.~\ref{subsec:subproblem}), and the initial permutation is descriptor-based (Sec.~\ref{ssec:initialization}). 
An overview of the entire Q-Match algorithm can be found in Alg.~\ref{alg:overview}.

\subsubsection{Choosing Cycles} \label{ssec:choosing_cycles} 

One option to select $C$ is by randomly drawing disjoint cycles, which might, however, take many iterations to reach the optimum. 
Instead we make use of the fact that the isometric shape matching problem creates QAPs in which the entries are highly correlated.

\paragraph{Worst Vertices:}
For a point with index $v$ in $M$ and a permutation $P$, we can quantify the influence of $v$ on \eqref{eq:DASProblem} via
\begin{align}
    I(v) = \sum_{w\in M} W_{v \cdot n + P(v), w \cdot n + P(w)},  \label{eq:influence}
\end{align}
where $P(v)$ denotes the index $v$ is mapped to. %
For vertices on $N$ we proceed equivalently. 
If $I(v)$ is high, this is an indicator for $P(v)$ being inconsistent with the majority of other matches in $P$. 
Therefore, we collect the $m$ vertices with highest values for $I$ on each shape in the two sets $I_M$ and $I_N$. 
The potential improvement of \eqref{eq:DASProblem} is maximized by this choice. 
See Sec.~\ref{subsec:convergence} for an analysis of the size of $m$.

\paragraph{2-Cycles:}
Subsequently, we choose a set of $k$ random but disjoint 2-cycles from $I_M, I_N$. While the limitation to 2-cycles might sound restrictive, the following lemma (proven in the appendix and in many abstract algebra textbooks \cite{scott2012group}) highlights the expressive power of disjoint 2-cycles: 

\begin{lemma}\label{lemma1}
Every permutation $P$ can be written as $P=Q R$, where $Q$ and $R$ are products of disjoint 2-cycles. %
\end{lemma}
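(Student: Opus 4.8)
The plan is to show that any permutation decomposes as a product of two involutions, each of which is a product of \emph{pairwise disjoint} $2$-cycles. Since disjoint permutations commute and any permutation is a product of its cycles, it suffices to prove the claim for a single $k$-cycle $\sigma = (a_1\ a_2\ \cdots\ a_k)$; the general case follows by handling each cycle of $P$ separately and collecting all the ``$Q$-factors'' into one involution and all the ``$R$-factors'' into another, disjointness across different cycles being automatic because their supports are disjoint.

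For a single $k$-cycle, the key construction is the standard ``reversal'' trick: define $R$ to be the permutation that reverses the list, i.e.\ the product of disjoint transpositions pairing $a_i$ with $a_{k+1-i}$ (the middle element, if $k$ is odd, is fixed), and define $Q$ similarly as a reversal but shifted by one position, i.e.\ pairing $a_i$ with $a_{k+2-i}$ (indices mod $k$ where appropriate), which again is a product of disjoint transpositions fixing at most one point. Both $Q$ and $R$ are involutions supported inside $\{a_1,\dots,a_k\}$, so each is a product of disjoint $2$-cycles. The remaining step is to verify $\sigma = QR$ by a direct computation on indices: tracking where an arbitrary $a_i$ is sent by $R$ then by $Q$ should yield $a_{i+1}$ (cyclically), so that the composition is exactly the $k$-cycle. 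I would carry this out by splitting into the parity cases for $k$ and chasing the indices, being careful with the fixed points and the wrap-around.

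Finally, I would assemble the global statement: write $P = \sigma_1 \sigma_2 \cdots \sigma_r$ as a product of its disjoint nontrivial cycles, apply the single-cycle decomposition $\sigma_t = Q_t R_t$ to each, set $Q = Q_1 Q_2 \cdots Q_r$ and $R = R_1 R_2 \cdots R_r$, and observe that because the $\sigma_t$ have disjoint supports, so do the $Q_t$ among themselves and the $R_t$ among themselves; hence $Q$ and $R$ are each genuinely products of disjoint $2$-cycles, and $P = QR$ follows from commutativity of disjoint permutations. The main obstacle is purely bookkeeping: getting the index arithmetic in the two shifted reversals right across the even/odd cases so that the product is the $k$-cycle rather than its inverse or a shifted version — everything else is routine.
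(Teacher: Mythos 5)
Your proposal is correct and follows essentially the same route as the paper's proof: the paper likewise writes a $k$-cycle (w.l.o.g.\ $(1\,2\,\cdots\,k)$) as a product of two explicit ``reversal-type'' involutions $Q$ and $R$, splitting into even and odd $k$, and then extends to general $P$ by decomposing into disjoint cycles and collecting the factors, exactly as you describe. The only difference is that you defer the one-line index check $Q(R(a_i)) = a_{k+2-(k+1-i)} = a_{i+1}$, which the paper also dispatches with ``easily checked.''
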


\subsubsection{Subproblems}\label{subsec:subproblem}

3D meshes normally consist of more than thousand vertices, this means $W \in \mathbb{R}^{n^2 \times n^2}$ cannot be precomputed and stored in memory. 
Therefore, in each iteration all needed entries of $W$ to compute $\tilde W$ for \eqref{eq:cyclicalpha} have to be computed from scratch.
As can be seen in \eqref{eq:Couplings} several evaluations of the cost function $E$ are needed for $\tilde W$ which makes this operation expensive. 

To be more efficient, we split the computation into the calculation of $W_s$, a $k^2 \times k^2$ reduction of $W$ based on the $k$ worst vertices in $I_M, I_N$. 
This is the most expensive operation in our algorithm (see Fig.~\ref{fig:runtime}).
For a set of 2-cycles on $I_M, I_N$, computing $\tilde W$ is now efficient.
To reduce the number of times we have to compute $W_s$, instead of just choosing one set of 2-cycles on each $I_M, I_N$, we evaluate several sets. 
In practice, we sample sets of 2-cycles on $I_M, I_N$ until every possible 2-cycle has been chosen at least once.
A detailed description how to compute $W_s$ and  $\tilde W_s$ and a runtime analysis 
can be found in the supplementary material.

\subsubsection{Initialization} \label{ssec:initialization}

The only thing left is choosing $P_0$ for the first iteration. 
Instead of a random initialization, we calculate a descriptor-based similarity $S \in \mathbb{R}^{N \times N}$ between all vertices of $M$ and $N$, equivalent to the strategy of \cite{vestner2017efficient}. 
$S_{mn}$ contains the inner product of the normalized HKS \cite{sun09hks} and SHOT \cite{tombari2010unique} descriptors of vertices $m \in M, n \in N$, and we solve a linear assignment problem on $S$ to get the first permutation.

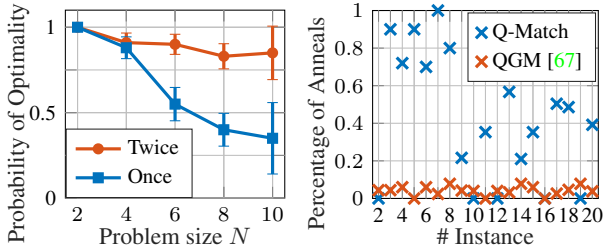
\begin{figure}
    \centering
    % This file was created by matlab2tikz.
%
%The latest updates can be retrieved from
%  http://www.mathworks.com/matlabcentral/fileexchange/22022-matlab2tikz-matlab2tikz
%where you can also make suggestions and rate matlab2tikz.
%
\definecolor{mycolor1}{rgb}{0.00000,0.44700,0.74100}%
\definecolor{mycolor2}{rgb}{0.85000,0.32500,0.09800}%
\begin{tikzpicture}
\begin{axis}[%
width=.35\linewidth,
height=.3\linewidth,
scale only axis,
xmin=1.5,
xmax=10.5,
ymin=0,
ymax=1.1,
xmajorgrids,
ymajorgrids,
%every x tick label/.append style={font=\color{black}, font=\footnotesize},
%every y tick label/.append style={font=\color{black}, font=\footnotesize},
axis background/.style={fill=white},
%axis x line*=bottom,
%axis y line*=left,
%x label style={at={(axis description cs:0.5,0.02)},anchor=north},
%y label style={at={(axis description cs:0.1,.5)},rotate=0,anchor=south},
yticklabels={},
extra y ticks={0,0.25,0.5,0.75,1},
extra y tick labels={0,,0.5,,1},
y tick label style = {font=\footnotesize},
xticklabels={},
extra x ticks={2,4,6,8,10},
x tick label style = {font=\footnotesize},
xlabel style={font=\color{white!15!black}},
xlabel={\small Problem size $N$},
x label style={at={(axis description cs:0.5,0.1)}, anchor=north},
ylabel style={font=\color{white!15!black}},
ylabel={\small Probability of Optimality},
y label style={at={(axis description cs:0.3,.4)},rotate=0,anchor=south},
%x tick label style={/pgf/number format/fixed},
title style={font=\bfseries},
legend style={at={(0.58,0.0)}, anchor=south east, legend cell align=left, align=left, draw=white!15!black, font=\footnotesize}
]

\addplot+[color=mycolor2, mark options={scale=0.75}, line width=1.2pt, error bars/.cd, 
    y fixed,
    y dir=both, 
    y explicit,error bar style={line width=1.2pt,solid}]
  table[x=x, y=y,y error=error, row sep=crcr]{%
x   y       error \\
2	1       0\\
4	0.91    0.056091625043316 \\
6	0.9     0.0588\\
8	0.83    0.073624028686292\\
10	0.85    0.156493450342179 \\
};
\addlegendentry{Twice}

\addplot+[color=mycolor1, mark options={scale=0.75}, line width=1.2pt, error bars/.cd, 
    y fixed,
    y dir=both, 
    y explicit,error bar style={line width=1.2pt,solid}]
  table[x=x, y=y,y error=error, row sep=crcr]{%
x   y       error \\
2	1       0 \\
4	0.88    0.063692461092346 \\
6	0.55    0.097508768836449 \\
8	0.4     0.096019997917101 \\
10	0.35    0.209041144275475 \\
};
\addlegendentry{Once}

\end{axis}

\end{tikzpicture}%
    % This file was created by matlab2tikz.
%
%The latest updates can be retrieved from
%  http://www.mathworks.com/matlabcentral/fileexchange/22022-matlab2tikz-matlab2tikz
%where you can also make suggestions and rate matlab2tikz.
%
\definecolor{mycolor1}{rgb}{0.00000,0.44700,0.74100}%
\definecolor{mycolor2}{rgb}{0.85000,0.32500,0.09800}%
\begin{tikzpicture}

\begin{axis}[%
width=.36\linewidth,
height=.3\linewidth,
scale only axis,
xmin=1.5,
xmax=20.5,
ymin=0,
ymax=1,
xmajorgrids,
ymajorgrids,
%every x tick label/.append style={font=\color{black}, font=\footnotesize},
%every y tick label/.append style={font=\color{black}, font=\footnotesize},
axis background/.style={fill=white},
%axis x line*=bottom,
%axis y line*=left,
%x label style={at={(axis description cs:0.5,0.02)},anchor=north},
%y label style={at={(axis description cs:0.1,.5)},rotate=0,anchor=south},
xlabel style={font=\color{white!15!black}},
xlabel={\small \# Instance},
xticklabels={},
extra x ticks={1,2,3,4,5,6,7,8,9,10,11,12,13,14,15,16,17,18,19,20},
extra x tick labels={,2,,4,,6,,8,,10,,12,,14,,16,,18,,20},
x label style={at={(axis description cs:0.5,0.1)}, anchor=north},
x tick label style = {font=\footnotesize},
ylabel style={font=\color{white!15!black}},
ylabel={\small Percentage of Anneals},
y label style={at={(axis description cs:0.26,.4)},rotate=0,anchor=south},
y tick label style = {font=\footnotesize},
%x tick label style={/pgf/number format/fixed},
title style={font=\bfseries},
legend style={at={(0.99,0.62)}, anchor=south east, legend cell align=left, align=left, draw=white!15!black, font=\footnotesize}
]
\addplot[only marks, mark=x, mark options={very thick}, mark size=3pt, draw=mycolor1] table[row sep=crcr]{%
x	y\\
1	1\\
2	0\\
3	0.9\\
4	0.72\\
5	0.9\\
6	0.7\\
7	1\\
8	0.8\\
9	0.216\\
10	0\\
11	0.3528\\
12	0\\
13	0.567\\
14	0.21\\
15	0.3528\\
16	0\\
17	0.504\\
18	0.486\\
19	0\\
20	0.392\\
};
\addlegendentry{Q-Match}

\addplot[only marks, mark=x, mark options={very thick}, mark size=3pt, draw=mycolor2] table[row sep=crcr]{%
x	y\\
1	0.02\\
2	0.044\\
3	0.044\\
4	0.06\\
5	0\\
6	0.06\\
7	0.024\\
8	0.078\\
9	0.042\\
10	0.04\\
11	0\\
12	0.04\\
13	0.032\\
14	0.078\\
15	0.06\\
16	0\\
17	0.026\\
18	0.046\\
19	0.078\\
20	0.038\\
};
\addlegendentry{QGM \cite{seelbach20quantum}}

\end{axis}

\end{tikzpicture}%
    \caption{ Experiments on how often Q-Match retrieves the optimum for \eqref{eq:DASProblem}. (Left) Fraction of random instances with increasing problem size for which our iterative method leads to the optimum with brute-force calculation. (Right) Success rate of Q-Match and QGM \cite{seelbach20quantum} on $20$  random problems. We count (QGM) or lower bound (ours) the percentage of anneals that reach the optimum. }
    \label{fig:Benchmark}
\end{figure}

\section{Experiments}\label{sec:experiments} 
We evaluate accuracy, convergence, and runtime 
of our method on random QAP instances (Sec.~\ref{subsec:CompWithInserted}), FAUST (Sec.~\ref{subsec:Faust}) and QAPLIB (Sec.~\ref{subsec:qaplib}). 
Since quantum computing time is still extremely expensive, we only evaluate on subsets of FAUST and QAPLIB, but our results show how promising this technology can be in the future. 
All experiments were done on Intel i5 8265U CPU with 16GB RAM using Python  $3.8$ and D-Wave Advantage system $1.1$ accessed by Leap 2. 
Note that one can use classical QUBO solvers for the subproblems. In the appendix, we present experiments for this with a simulated annealing sampler.

\subsection{Comparison to Penalty Based Implementation} \label{subsec:CompWithInserted}
We compare Q-Match to the inserted method in \cite{seelbach20quantum} for problems of the form  \eqref{eq:isometricW} with distances $d_{i,j}$ drawn uniformly at random from $[0,1]$, and $W$ being constructed using a randomly drawn true permutation $P$. 
We compare the success probability of both methods on $20$ random instances, see Fig.~\ref{fig:Benchmark}-(right). 
Q-Match finds the optimum in all but $4$ cases.
For a fair comparison, we restrict the number of anneals per iteration in our method to $10$ and use $500$ anneals for the inserted method. 
Additionally, the success probability of the iterative method is bounded by the success probabilities of the individual steps multiplied with each other. 
In the case that multiple distinct outputs have the lowest energy, both results will be counted as success. 
For $n\in \{2,4,6,8,10 \}$, we count how often the optimal permutation is reached, if each subproblem is solved to global optimality.
For this we generated $20$ random instances for $n=10$ and $100$ random instances for the other dimensions. The errors are calculated as a binomial proportion confidence interval that should contain the true probability in $95\%$ of the cases. This experiment was performed first so that all 2-cycles occur only once and a second time where the whole process is carried out twice, see Fig.~\ref{fig:Benchmark}-(left). 
Moreover, we study if the solution with the lowest value returned by the quantum annealer is globally optimal. 
For $500$ runs and coupling matrices up to a size of $13$ this was always the case. 
Therefore, the number of optima found by Q-Match is identical to the probabilities in Fig.~\ref{fig:Benchmark}-(left). 

\begin{figure}
    \centering
    % This file was created by matlab2tikz.
%
%The latest updates can be retrieved from
%  http://www.mathworks.com/matlabcentral/fileexchange/22022-matlab2tikz-matlab2tikz
%where you can also make suggestions and rate matlab2tikz.
%
\definecolor{mycolor1}{rgb}{0.00000,0.44700,0.74100}%
\definecolor{mycolor2}{rgb}{0.85000,0.32500,0.09800}%
\definecolor{mycolor3}{rgb}{0.92900,0.69400,0.12500}%
\definecolor{mycolor4}{rgb}{0.49400,0.18400,0.55600}%
\begin{tikzpicture}

\begin{axis}[%
width=.35\linewidth,
height=.28\linewidth,
at={(0.797in,0.617in)},
scale only axis,
xmin=0,
xmax=0.2,
ymin=0,
ymax=100,
xmajorgrids,
ymajorgrids,
%every x tick label/.append style={font=\color{black}, font=\footnotesize},
%every y tick label/.append style={font=\color{black}, font=\footnotesize},
axis background/.style={fill=white},
%axis x line*=bottom,
%axis y line*=left,
%x label style={at={(axis description cs:0.5,0.02)},anchor=north},
%y label style={at={(axis description cs:0.1,.5)},rotate=0,anchor=south},
xlabel style={font=\color{white!15!black}},
xlabel={Geodesic error},
x label style = {font=\footnotesize},
x tick label style = {font=\footnotesize},
x label style={at={(axis description cs:0.5,0.05)}, anchor=north},
y label style = {font=\footnotesize},
ylabel style={font=\color{white!15!black}},
y label style = {font=\footnotesize},
ylabel={\% Correspondences},
y label style={at={(axis description cs:0.27,.5)},rotate=0,anchor=south},
%x tick label style={/pgf/number format/fixed},
title style={font=\bfseries},
legend style={at={(1,0.0)}, anchor=south east, legend cell align=left, align=left, draw=white!15!black, font=\footnotesize}
]
\addplot [color=mycolor2, dashed, line width=1.5pt]
  table[row sep=crcr]{%
0	31\\
0.01	31\\
0.02	33\\
0.03	35.6\\
0.04	41.6\\
0.05	45\\
0.06	50\\
0.07	54.6\\
0.08	59.6\\
0.09	64.8\\
0.1	67.6\\
0.11	71\\
0.12	75.4\\
0.13	79.6\\
0.14	81.8\\
0.15	82.4\\
0.16	84.8\\
0.17	85.6\\
0.18	87\\
0.19	88\\
0.2	89.2\\
0.21	89.8\\
0.22	90.4\\
0.23	91\\
0.24	91.4\\
0.25	91.8\\
0.26	92.2\\
0.27	92.6\\
0.28	92.8\\
0.29	93.2\\
0.3	93.2\\
0.31	93.6\\
0.32	94\\
0.33	94\\
0.34	94.6\\
0.35	94.8\\
0.36	95.4\\
0.37	95.8\\
0.38	95.8\\
0.39	96\\
0.4	96\\
0.41	96\\
0.42	96\\
0.43	96.4\\
0.44	97\\
0.45	97\\
0.46	97\\
0.47	97.2\\
0.48	97.2\\
0.49	97.2\\
0.5	97.2\\
};
\addlegendentry{FMs \cite{ovsjanikov2012functional}}

\addplot [color=mycolor1,dashed,line width=1.5pt]
  table[row sep=crcr]{%
0	28.1841638445412\\
0	54.6816479400749\\
0.01	56.3295880149813\\
0.02	64.4943820224719\\
0.03	72.5842696629214\\
0.04	79.625468164794\\
0.05	83.8202247191011\\
0.06	87.4906367041199\\
0.07	89.5880149812734\\
0.08	91.310861423221\\
0.09	93.1835205992509\\
0.1	94.1573033707865\\
0.11	94.9063670411985\\
0.12	95.6554307116105\\
0.13	96.3295880149813\\
0.14	96.8539325842697\\
0.15	97.5280898876404\\
0.16	97.9026217228464\\
0.17	98.2022471910112\\
0.18	98.3520599250936\\
0.19	98.8014981273408\\
0.2	99.1760299625468\\
0.21	99.5505617977528\\
0.22	99.8501872659176\\
0.23	100\\
0.24	100\\
0.25	100\\
0.26	100\\
0.27	100\\
0.28	100\\
0.29	100\\
0.3	100\\
0.31	100\\
0.32	100\\
0.33	100\\
0.34	100\\
0.35	100\\
0.36	100\\
0.37	100\\
0.38	100\\
0.39	100\\
0.4	100\\
0.41	100\\
0.42	100\\
0.43	100\\
0.44	100\\
0.45	100\\
0.46	100\\
0.47	100\\
0.48	100\\
0.49	100\\
0.5	100\\
};
\addlegendentry{SA \cite{holzschuh20simanneal}}

\addplot [color=mycolor3,line width=1.5pt,dashed]
  table[row sep=crcr]{%
0	87.5166002656043\\
0.01	88.3798140770252\\
0.02	91.0358565737052\\
0.03	93.0942895086321\\
0.04	95.0199203187251\\
0.05	96.2151394422311\\
0.06	96.7463479415671\\
0.07	97.0119521912351\\
0.08	97.4103585657371\\
0.09	98.406374501992\\
0.1	98.738379814077\\
0.11	98.937583001328\\
0.12	99.203187250996\\
0.13	99.33598937583\\
0.14	99.535192563081\\
0.15	99.667994687915\\
0.16	99.800796812749\\
0.17	99.867197875166\\
0.18	99.867197875166\\
0.19	99.867197875166\\
0.2	99.867197875166\\
0.21	99.867197875166\\
0.22	99.933598937583\\
0.23	99.933598937583\\
0.24	99.933598937583\\
0.25	99.933598937583\\
0.26	99.933598937583\\
0.27	100\\
0.28	100\\
0.29	100\\
0.3	100\\
0.31	100\\
0.32	100\\
0.33	100\\
0.34	100\\
0.35	100\\
0.36	100\\
0.37	100\\
0.38	100\\
0.39	100\\
0.4	100\\
0.41	100\\
0.42	100\\
0.43	100\\
0.44	100\\
0.45	100\\
0.46	100\\
0.47	100\\
0.48	100\\
0.49	100\\
0.5	100\\
};
\addlegendentry{ZO \cite{melzi19zoomout}}

\addplot [color=red,line width=2.0pt]
  table[row sep=crcr]{%
0	52.6693227091633\\
0.01	53.3067729083665\\
0.02	59.1235059760956\\
0.03	68.00796812749\\
0.04	77.1713147410359\\
0.05	83.1872509960159\\
0.06	86.9322709163347\\
0.07	90.2390438247012\\
0.08	92.9880478087649\\
0.09	94.7410358565737\\
0.1	96.3346613545817\\
0.11	97.0517928286853\\
0.12	97.9282868525896\\
0.13	98.4860557768924\\
0.14	98.7250996015936\\
0.15	98.8446215139442\\
0.16	99.0438247011952\\
0.17	99.1633466135458\\
0.18	99.2430278884462\\
0.19	99.2828685258964\\
0.2	99.3227091633466\\
0.21	99.4422310756972\\
0.22	99.601593625498\\
0.23	99.8406374501992\\
0.24	99.9601593625498\\
0.25	99.9601593625498\\
0.26	99.9601593625498\\
0.27	99.9601593625498\\
0.28	99.9601593625498\\
0.29	99.9601593625498\\
0.3	99.9601593625498\\
0.31	99.9601593625498\\
0.32	99.9601593625498\\
0.33	99.9601593625498\\
0.34	99.9601593625498\\
0.35	99.9601593625498\\
0.36	99.9601593625498\\
0.37	99.9601593625498\\
0.38	99.9601593625498\\
0.39	99.9601593625498\\
0.4	99.9601593625498\\
0.41	99.9601593625498\\
0.42	99.9601593625498\\
0.43	99.9601593625498\\
0.44	99.9601593625498\\
0.45	99.9601593625498\\
0.46	99.9601593625498\\
0.47	99.9601593625498\\
0.48	99.9601593625498\\
0.49	99.9601593625498\\
0.5	99.9601593625498\\
0.51	99.9601593625498\\
0.52	99.9601593625498\\
0.53	99.9601593625498\\
0.54	99.9601593625498\\
0.55	99.9601593625498\\
0.56	99.9601593625498\\
0.57	99.9601593625498\\
0.58	99.9601593625498\\
0.59	99.9601593625498\\
0.6	99.9601593625498\\
0.61	99.9601593625498\\
0.62	99.9601593625498\\
0.63	99.9601593625498\\
0.64	99.9601593625498\\
0.65	100\\
0.66	100\\
0.67	100\\
0.68	100\\
0.69	100\\
0.7	100\\
0.71	100\\
0.72	100\\
0.73	100\\
0.74	100\\
0.75	100\\
0.76	100\\
0.77	100\\
0.78	100\\
0.79	100\\
0.8	100\\
0.81	100\\
0.82	100\\
0.83	100\\
0.84	100\\
0.85	100\\
0.86	100\\
0.87	100\\
0.88	100\\
0.89	100\\
0.9	100\\
0.91	100\\
0.92	100\\
0.93	100\\
0.94	100\\
0.95	100\\
0.96	100\\
0.97	100\\
0.98	100\\
0.99	100\\
1	100\\
};
\addlegendentry{Q-Match}

\end{axis}

\end{tikzpicture}%%
    % This file was created by matlab2tikz.
%
%The latest updates can be retrieved from
%  http://www.mathworks.com/matlabcentral/fileexchange/22022-matlab2tikz-matlab2tikz
%where you can also make suggestions and rate matlab2tikz.
%
\definecolor{mycolor1}{rgb}{0.00000,0.44700,0.74100}%
\definecolor{mycolor2}{rgb}{0.85000,0.32500,0.09800}%
\definecolor{mycolor3}{rgb}{0.92900,0.69400,0.12500}%
\definecolor{mycolor4}{rgb}{0.49400,0.18400,0.55600}%
\begin{tikzpicture}

\begin{axis}[%
width=.4\linewidth,
height=.26\linewidth,
at={(0.5in,0.1in)},
scale only axis,
xmin=1,
xmax=15,
ymin=0,
ymax=30000,
xmajorgrids,
ymajorgrids,
%every x tick label/.append style={font=\color{black}, font=\footnotesize},
%every y tick label/.append style={font=\color{black}, font=\footnotesize},
axis background/.style={fill=white},
%axis x line*=bottom,
%axis y line*=left,
%x label style={at={(axis description cs:0.5,0.02)},anchor=north},
%y label style={at={(axis description cs:0.1,.5)},rotate=0,anchor=south},
xlabel style={font=\color{white!15!black}},
xlabel={\small \# Iteration},
x tick label style = {font=\footnotesize},
y tick label style = {font =\footnotesize},
x label style={at={(axis description cs:0.5,0.1)}, anchor=north},
ylabel style={font=\color{white!15!black}},
ylabel={\small Energy \eqref{eq:DASProblem}},
y label style={at={(axis description cs:0.28,.5)},rotate=0,anchor=south},
%x tick label style={/pgf/number format/fixed},
title style={font=\bfseries},
legend style={at={(0.986,0.48)}, anchor=south east, legend cell align=left, align=left, draw=white!15!black, font=\footnotesize}
]
\addplot [color=mycolor1, line width=2pt]
  table[row sep=crcr]{%
1	28967.7740945007\\
2	20462.8711526334\\
3	17938.8933045271\\
4	15133.3969379716\\
5	12933.9466254587\\
6	11590.26628047\\
7	10428.6223381632\\
8	9700.66895599068\\
9	9200.43498568485\\
10	8782.83810686946\\
11	8478.04747317207\\
12	8149.97998745927\\
13	7964.23038636745\\
14	7853.93685846935\\
15	7741.44210339278\\
16	7585.12901476899\\
17	7506.80118382658\\
18	7506.80118382658\\
};
\addlegendentry{Worst 26}

\addplot [color=mycolor3, line width=2.0pt]
  table[row sep=crcr]{%
1	28967.7740945007\\
2	19785.5638124038\\
3	15965.625080292\\
4	12559.5330345494\\
5	10716.6133549037\\
6	9161.67187490522\\
7	8331.93259485467\\
8	7855.8441526553\\
9	7422.46809373534\\
10	7005.97813927963\\
11	6789.73765964623\\
12	6755.55681831424\\
13	6758.7013013385\\
14	6698.82219891283\\
15	6605.60138840007\\
16	6566.61404866082\\
17	6562.50554973589\\
18	6501.10204693944\\
19	6525.49124386912\\
};
\addlegendentry{Worst 40}

\addplot [color=mycolor4, line width=2.0pt]
  table[row sep=crcr]{%
1	28967.7740945007\\
2	18455.8661354276\\
3	13636.3958207801\\
4	11043.0417990707\\
5	9409.15456174472\\
6	8950.78079372073\\
7	8002.40795811593\\
8	7449.97648451161\\
9	6921.51806768985\\
10	6651.48589420183\\
11	6574.70936789903\\
12	6441.67271594747\\
13	6285.61209839924\\
14	6082.44440415722\\
15	6166.50557865852\\
};
\addlegendentry{Worst 50}

\addplot [color=gray, dashed, line width = 1.5pt]
  table[row sep=crcr]{%
1	5479.47419680382\\
28	5479.47419680382\\
};
%\addlegendentry{Optimum}

\end{axis}

\end{tikzpicture}%
    \caption{Evaluation of cumulative error \cite{kim11bim} (left) and convergence (right) on FAUST. (Left) We compare against Simulated Annealing \cite{holzschuh20simanneal} without post-processing and Functional Maps \cite{ovsjanikov2012functional}. Dashed lines indicate non-quantum approaches. The results have symmetry-flipped solutions removed, these have a comparable final energy for all methods but are not recognized as correct in the evaluation. (Right) We show convergence of the energy over $30$ iterations. The larger the set of worst vertices, the faster our method converges. The dashed grey line shows the optimal energy. }
    \label{fig:convergence}\label{fig:faust}
\end{figure}
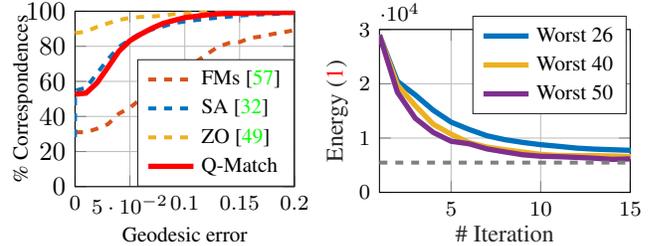

\begin{figure}
    \centering
    (i)
    \includegraphics[width=.18\linewidth]{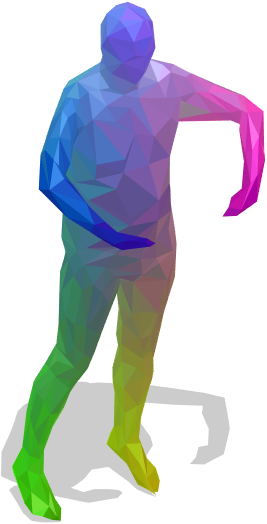}
    \includegraphics[width=.17\linewidth]{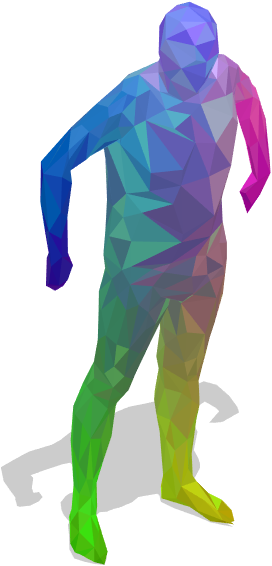}
    (ii)
    \includegraphics[width=.13\linewidth]{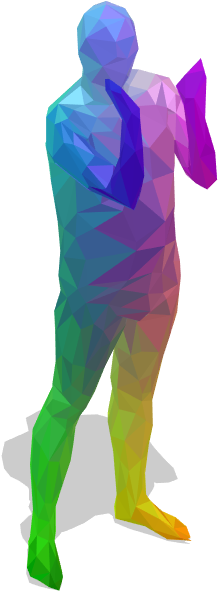}
    \includegraphics[width=.21\linewidth]{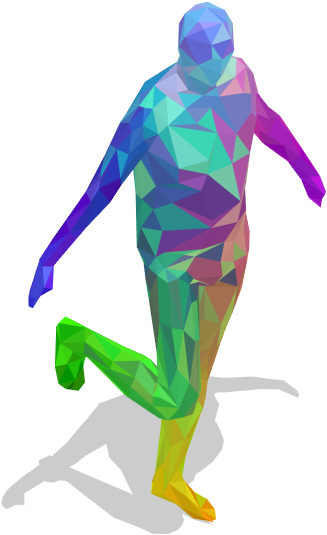}
    \caption{Example correspondences from the FAUST registrations. (i) Our results are overall correct with some outliers. (ii) A failure case. The solution partially swapped front and back. 
    These kinds of solutions are near-isometries, have energies close to the optimum, and often occur in all methods that purely solve \eqref{eq:DASProblem}. }
    \label{fig:examplesLow}
\end{figure}

\subsection{FAUST}\label{subsec:Faust}
We perform experiments on the registrations of the FAUST dataset \cite{bogo2014faust}. 
We evaluate on $5$ different pairs from one class within FAUST that were downsampled to $502$ vertices. 
It is in theory possible to use our method on the original resolution and more pairs, but QPUs are not as freely available as traditional hardware, and we focus our available computing time on a deeper analysis of our method. 
We apply Q-Match as described in Alg.~\ref{alg:overview} with 500 anneals, and compare to two non-quantum methods, Simulated Annealing \cite{holzschuh20simanneal} and Functional Maps \cite{ovsjanikov2012functional}. 
\cite{holzschuh20simanneal} is applied without ZoomOut \cite{melzi19zoomout} such that it is close to a version of our algorithm where only one 2-cycle is chosen in each iteration. The functional maps implementation includes a Laplacian commutativity regularizer and retrieves the final point-to-point correspondence through an LAP \cite{ovsjanikov2012functional}.
While \cite{seelbach20quantum} is the conceptionally closest method to ours, it only produces matches for up to preselected $4$ vertices. 
The quantitative evaluation can be seen in Fig.~\ref{fig:faust}, and some qualitative examples of our method in Fig.~\ref{fig:examplesLow}.
While we do not reach the same accuracy as the recent \cite{holzschuh20simanneal}, we do outperform the classical but still popular functional maps. 
Since AQCers are still newly developed and have limited complexity in comparison to traditional computers, we see this as a successful application of quantum annealing for real-world problems.

\subsection{Convergence} \label{subsec:convergence}

We evaluate how many iterations Q-Match needs to converge on FAUST. 
Fig.~\ref{fig:convergence} shows the convergence when choosing $26, 40, 50$ worst vertices in each iteration. Since the margin between $40$ and $50$ is small, we chose to run all further experiments with the $40$ worst matches.
In each iteration every 2-cycle occurs once in a set $C$. 
Some exemplary matches obtained in this way can be seen in Fig.~\ref{fig:examplesLow}. 

\subsection{Runtime} \label{subsec:runtime}

Fig.~\ref{fig:runtime} shows how the runtime of Q-Match scales with the problem size given by the number of worst vertices. 
The calculation of the submatrix $W_s$ is the most expensive. 
Computing the matrix $\Tilde{W}$ given $W_s$ takes only ms, while the calculation of $W_s$ is in the order of seconds. 
We did not optimize or parallelize this operation though.
The QPU access time is the time the quantum annealer spends to solve the problem, and is mostly independent of the problem size. 

We report results for about $25$ minutes of QPU time, which allows us to evaluate $10^4$ QUBOs. One FAUST instance using $40$ worst vertices and $500$ anneals per QUBO requires about $2$min of QPU time.
For all experiments we used the default annealing path and the default annealing time of $20 \mu$s. The chain strength was chosen as $1.0001$ times the largest absolute value of couplings or biases.

\begin{figure}
    \centering
    % This file was created by matlab2tikz.
%
%The latest updates can be retrieved from
%  http://www.mathworks.com/matlabcentral/fileexchange/22022-matlab2tikz-matlab2tikz
%where you can also make suggestions and rate matlab2tikz.
%
\definecolor{mycolor1}{rgb}{0.00000,0.44700,0.74100}%
\definecolor{mycolor2}{rgb}{0.85000,0.32500,0.09800}%
\definecolor{mycolor3}{rgb}{0.92900,0.69400,0.12500}%
\definecolor{mycolor4}{rgb}{0.49400,0.18400,0.55600}%
\begin{tikzpicture}

\begin{axis}[%
width=.35\linewidth,
height=.23\linewidth,
scale only axis,
xmin=10,
xmax=26,
ymin=0,
ymax=75,
xmajorgrids,
ymajorgrids,
%every x tick label/.append style={font=\color{black}, font=\footnotesize},
%every y tick label/.append style={font=\color{black}, font=\footnotesize},
axis background/.style={fill=white},
%axis x line*=bottom,
%axis y line*=left,
%x label style={at={(axis description cs:0.5,0.02)},anchor=north},
%y label style={at={(axis description cs:0.1,.5)},rotate=0,anchor=south},
y tick label style = {font=\footnotesize},
x tick label style = {font=\footnotesize},
xlabel style={font=\color{white!15!black}},
xlabel={\small \# worst vertices},
x label style={at={(axis description cs:0.5,0.1)}, anchor=north},
ylabel style={font=\color{white!15!black}},
ylabel={\small Runtime (in ms)},
y label style={at={(axis description cs:0.3,.5)},rotate=0,anchor=south},
%x tick label style={/pgf/number format/fixed},
title style={font=\bfseries},
legend style={at={(1,0.35)}, anchor=south east, legend cell align=left, align=left, draw=white!15!black, font=\footnotesize}
]
\addplot [color=mycolor1, line width=2pt]
  table[row sep=crcr]{%
10	66.96275\\
12	68.704125\\
14	67.994125\\
16	69.13325\\
18	71.229875\\
20	69.854875\\
22	70.109625\\
24	68.741\\
26	70.150625\\
};
\addlegendentry{QPU access}

\addplot [color=mycolor2, line width = 2pt]
  table[row sep=crcr]{%
10	0.26792\\
12	1.806795\\
14	1.243475\\
16	1.61332\\
18	2.36696\\
20	4.62347\\
22	6.21921\\
24	8.118925\\
26	13.121215\\
};
\addlegendentry{Couplings}

\end{axis}

\end{tikzpicture}%%
    \quad% This file was created by matlab2tikz.
%
%The latest updates can be retrieved from
%  http://www.mathworks.com/matlabcentral/fileexchange/22022-matlab2tikz-matlab2tikz
%where you can also make suggestions and rate matlab2tikz.
%
\definecolor{mycolor1}{rgb}{0.00000,0.44700,0.74100}%
\definecolor{mycolor3}{rgb}{0.92900,0.69400,0.12500}%
\definecolor{mycolor4}{rgb}{0.49400,0.18400,0.55600}%
\begin{tikzpicture}

\begin{axis}[%
width=.35\linewidth,
height=.23\linewidth,
scale only axis,
xmin=10,
xmax=26,
ymin=0,
ymax=18,
xmajorgrids,
ymajorgrids,
%every x tick label/.append style={font=\color{black}, font=\footnotesize},
%every y tick label/.append style={font=\color{black}, font=\footnotesize},
axis background/.style={fill=white},
%axis x line*=bottom,
%axis y line*=left,
%x label style={at={(axis description cs:0.5,0.02)},anchor=north},
%y label style={at={(axis description cs:0.1,.5)},rotate=0,anchor=south},
y tick label style = {font=\footnotesize},
x tick label style = {font=\footnotesize},
xlabel style={font=\color{white!15!black}},
xlabel={\small \# worst vertices},
x label style={at={(axis description cs:0.5,0.1)}, anchor=north},
ylabel style={font=\color{white!15!black}},
ylabel={\small Runtime (in s)},
y label style={at={(axis description cs:0.3,.5)},rotate=0,anchor=south},
%x tick label style={/pgf/number format/fixed},
title style={font=\bfseries},
legend style={at={(0.7,0.68)}, anchor=south east, legend cell align=left, align=left, draw=white!15!black, font=\footnotesize}
]
\addplot [color=mycolor3, line width = 2pt]
  table[row sep=crcr]{%
10	0.366596221923828\\
12	0.735301017761231\\
14	1.36912727355957\\
16	2.28434896469116\\
18	3.62363886833191\\
20	5.53341674804688\\
22	8.43944239616394\\
24	11.9365055561066\\
26	16.319833278656\\
};
\addlegendentry{Calc. $\tilde W$}

\end{axis}
\end{tikzpicture}%
    \caption{Influence of the problem size on the runtime. We increase the size of the set of worst vertices (see Sec.~\ref{subsec:qmatch}) and measure the runtime development of different parts of our pipeline. While the QPU access time is mostly independent of the problem size, calculating $\tilde W$ on the CPU grows drastically. }
    \label{fig:runtime}
\end{figure}
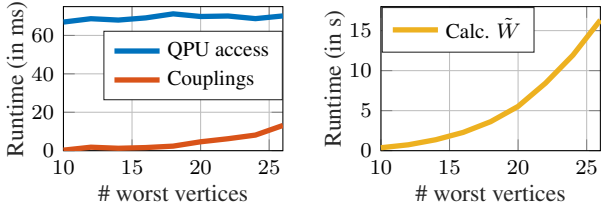

\subsection{QAPLIB}  \label{subsec:qaplib}

QAPLIB \cite{burkard97qaplib} provides a collection of QAPs together with the best known solution. 
This is the most commonly used benchmark for QAPs, with sets of problems from a variety of settings, \eg graphs but also unstructured data.
For some instances it is unclear if the best known solution is actually the optimum due to the complexity of the problem. 

We ran cyclic $\alpha$-expansion on some of the subsets ensuring that every 2-cycle occurs once when optimizing, and repeat this three times. We always choose the best solutions from $5000$ anneals if the size is $>25$, and $500$ anneals otherwise. The relative error we achieved is in Fig.~\ref{fig:Qaplibruntime_convergence}. 
For instances from \cite{burkard77qaplib, esc16qaplib, hadqaplib}, our solution is almost always within $1\%$ of the best solution.
For the $n=16$ instances in \cite{esc16qaplib}, we achieved the optimal solution in $9$ out of $10$ cases. 
Other sets contain instances up to size $30$ and pose a bigger challenge, our worst result was within $15\%$ of the optimum.
We report our exact solutions in the supplementary material.

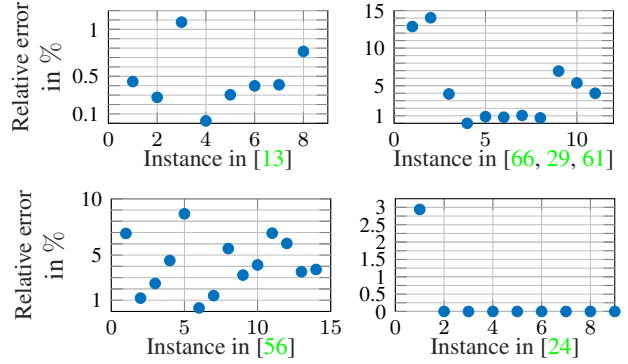
\begin{figure}
    \centering
    % This file was created by matlab2tikz.
%
%The latest updates can be retrieved from
%  http://www.mathworks.com/matlabcentral/fileexchange/22022-matlab2tikz-matlab2tikz
%where you can also make suggestions and rate matlab2tikz.
%
\definecolor{mycolor1}{rgb}{0.00000,0.44700,0.74100}%
\begin{tikzpicture}

\begin{axis}[%
width=.35\linewidth,
height=.18\linewidth,
at={(0.758in,0.481in)},
scale only axis,
xmin=0,
xmax=9,
ymin=0,
ymax=1.2,
grid=both,
grid style={line width=.1pt, draw=gray!50},
yticklabels={},
extra y ticks={0.1, 0.2, 0.3, 0.4, 0.5, 0.6,0.7,0.8,0.9,1,1.1,1.2},
extra y tick labels={0.1,,,,0.5,,,,,1,,},
xlabel style={font=\color{white!15!black}},
x tick label style = {font=\footnotesize},
y tick label style = {font=\footnotesize},
x label style={at={(axis description cs:0.5,0.2)}, anchor=north},
xlabel={\small Instance in \cite{burkard77qaplib}},
ylabel style={font=\color{white!15!black}, align= center},
ylabel={\small Relative error\\
in $\%$},
y label style={at={(axis description cs:0.25,.5)},rotate=0,anchor=south},
axis background/.style={fill=white},
]
\addplot[only marks, mark=*, mark options={fill=mycolor1}, fill, mark size=2.0pt, draw=mycolor1] table[row sep=crcr]{%
x	y\\
1	0.443863363720287\\
2	0.276411971967483\\
3	1.07678657476467\\
4	0.0252536817381799\\
5	0.30368233628415\\
6	0.398620428530183\\
7	0.410203562813805\\
8	0.765046012922443\\
};

\end{axis}
\end{tikzpicture}%%
    \quad % This file was created by matlab2tikz.
%
%The latest updates can be retrieved from
%  http://www.mathworks.com/matlabcentral/fileexchange/22022-matlab2tikz-matlab2tikz
%where you can also make suggestions and rate matlab2tikz.
%
\definecolor{mycolor1}{rgb}{0.00000,0.44700,0.74100}%
\begin{tikzpicture}

\begin{axis}[%
width=.35\linewidth,
height=.18\linewidth,
at={(0.758in,0.481in)},
scale only axis,
xmin=0,
xmax=12,
ymin=0,
ymax=15,
grid=both,
grid style={line width=.1pt, draw=gray!50},
yticklabels={},
extra y ticks={1,2,3,4,5,6,7,8,9,10,11,12,13,14,15},
extra y tick labels={1,,,,5,,,,,10,,,,,15},
xlabel style={font=\color{white!15!black}},
x tick label style = {font=\footnotesize},
y tick label style = {font=\footnotesize},
x label style={at={(axis description cs:0.5,0.2)}, anchor=north},
xlabel={\small Instance in \cite{scrqaplib,hadqaplib,rouqaplib}},
ylabel style={font=\color{white!15!black}},
y label style={at={(axis description cs:0.25,.5)},rotate=0,anchor=south},
axis background/.style={fill=white},
]
\addplot[only marks, mark=*, mark options={fill=mycolor1}, fill, mark size=2.0pt, draw=mycolor1] table[row sep=crcr]{%
x	y\\
1	12.8748806112703\\
2	14.0398904966758\\
3	3.90075433972552\\
4	0\\
5	0.881057268722474\\
6	0.806451612903225\\
7	1.04516610675625\\
8	0.722334585379958\\
9	6.93930233348052\\
10	5.36630812230032\\
11	3.99491676337864\\
};

\end{axis}
\end{tikzpicture}%
    % This file was created by matlab2tikz.
%
%The latest updates can be retrieved from
%  http://www.mathworks.com/matlabcentral/fileexchange/22022-matlab2tikz-matlab2tikz
%where you can also make suggestions and rate matlab2tikz.
%
\definecolor{mycolor1}{rgb}{0.00000,0.44700,0.74100}%
\begin{tikzpicture}

\begin{axis}[%
width=.35\linewidth,
height=.18\linewidth,
at={(0.758in,0.481in)},
scale only axis,
xmin=0,
xmax=15,
ymin=0,
ymax=10,
grid=both,
grid style={line width=.1pt, draw=gray!50},
yticklabels={},
extra y ticks={1,2,3,4,5,6,7,8,9,10},
extra y tick labels={1,,,,5,,,,,10},
xlabel style={font=\color{white!15!black}},
x tick label style = {font=\footnotesize},
y tick label style = {font=\footnotesize},
x label style={at={(axis description cs:0.5,0.2)}, anchor=north},
xlabel={\small Instance in \cite{nugqaplib}},
ylabel style={font=\color{white!15!black}, align= center},
ylabel={\small Relative error \\
in $\%$},
y label style={at={(axis description cs:0.25,.5)},rotate=0,anchor=south},
axis background/.style={fill=white},
]
\addplot[only marks, mark=*, mark options={fill=mycolor1}, fill, mark size=2.0pt, draw=mycolor1] table[row sep=crcr]{%
x	y\\
1	6.92041522491349\\
2	1.18343195266273\\
3	2.48447204968945\\
4	4.51612903225806\\
5	8.66050808314087\\
6	0.310880829015536\\
7	1.40077821011673\\
8	5.57834290401968\\
9	3.2258064516129\\
10	4.12844036697249\\
11	6.94444444444444\\
12	6.03744745892243\\
13	3.52303523035231\\
14	3.72305682560419\\
};

\end{axis}
\end{tikzpicture}%%
    % This file was created by matlab2tikz.
%
%The latest updates can be retrieved from
%  http://www.mathworks.com/matlabcentral/fileexchange/22022-matlab2tikz-matlab2tikz
%where you can also make suggestions and rate matlab2tikz.
%
\definecolor{mycolor1}{rgb}{0.00000,0.44700,0.74100}%
\begin{tikzpicture}

\begin{axis}[%
width=.35\linewidth,
height=.18\linewidth,
at={(0.758in,0.481in)},
scale only axis,
xmin=0,
xmax=9,
ymin=0,
ymax=3.25,
grid=both,
grid style={line width=.1pt, draw=gray!50},
xlabel style={font=\color{white!15!black}},
yticklabels={},
extra y ticks={0, 0.25, 0.5, 0.75, 1, 1.25,1.5,1.75,2,2.25,2.5,2.75,3},
extra y tick labels={0,,0.5,,1,,1.5,,2,,2.5,,3},
x tick label style = {font=\footnotesize},
y tick label style = {font=\footnotesize},
x label style={at={(axis description cs:0.5,0.2)}, anchor=north},
xlabel={\small Instance in \cite{esc16qaplib}},
ylabel style={font=\color{white!15!black}},
y label style={at={(axis description cs:0.25,.5)},rotate=0,anchor=south},
axis background/.style={fill=white},
]
\addplot[only marks, mark=*, mark options={fill=mycolor1}, fill, mark size=2.0pt, draw=mycolor1] table[row sep=crcr]{%
x	y\\
1	2.94117647058822\\
2	0\\
3	0\\
4	0\\
5	0\\
6	0\\
7	0\\
8	0\\
9	0\\
10	0\\
};

\end{axis}
\end{tikzpicture}%%
    \caption{Relative error $ \frac{E_{\text{obtained}}-E_{\text{opt}}}{E_{\text{opt}}}$ of our method in  percentage for the instances of  \cite{burkard77qaplib}, %
    \cite{scrqaplib} (1-3), \cite{hadqaplib} (4-8) and \cite{rouqaplib} (9-11),%
    \cite{nugqaplib}, %
    and \cite{esc16qaplib} %
    in QAPLIB. 
    The problem sizes range between $12$ and $30$, of which \cite{nugqaplib} contains the larger ones where we do less well. 
     }
    \label{fig:Qaplibruntime_convergence}
\end{figure}

\section{Conclusion} \label{sec:conclusion} 
We introduce Q-Match and experimentally show that it can solve correspondence problems of sizes encountered in practical applications. 
This conclusion is made for the first time in literature for methods utilizing quantum annealing. %
Refraining from explicit permutation constraints and addressing the problem iteratively allowed us to significantly outperform the previous quantum state of the art, both in terms of the supported problem sizes and the probability to measure a globally optimal solution. We even achieved results comparable to a classical method.  
We hope that our work inspires further research in quantum annealing and related optimization problems in the vision community. 

{ 
\small 
\noindent\textbf{Acknowledgements.} 
This work was partially supported by the ERC  consolidator grant 4DReply (770784). 
}

{\small
\bibliographystyle{ieee_fullname}
\bibliography{egbib}
}

\onecolumn 
\newpage 
\twocolumn
\begin{center}
\textbf{{\Large Supplementary Material}} 
\end{center}
\appendix

This supplementary material provides 
a deeper analysis of the proposed Q-Match approach and more  experimental details. This includes: 
\begin{itemize}[leftmargin=*]\itemsep0em 
    \item Further analysis of the solution quality for individual  QUBOs and visualizations of the minor embeddings  (Sec.~\ref{sec:QUBO_Analysis}). 
    \item Derivation of Eq.~\eqref{eq:Parametrization} and proof of Lemma \ref{lemma1} from the main matter  (Sec.~\ref{sec:derivations_and_proofs}). 
    \item A toy example, in which all 2-cycles individually do not improve the energy (Sec.~\ref{sec:toy_example}). 
    \item More details on calculating $W_s$ and $\tilde W$ (Sec.~\ref{sec:W_calculations}). 
    \item A list of our solutions to selected QAPLIB problems \textit{vs} ground truth (Sec.~\ref{sec:exact_QAPLIB}). 
\end{itemize}

\section{Analysis of the Individual QUBOs}\label{sec:QUBO_Analysis} 
We analyze the quality of the solution of the individual QUBOs in dependence of the dimension. The success probability for one QUBO solution is defined as the fraction between the anneals that ended up in the optimum and the number of anneals. The success probability averaged over $20$ QUBOs per dimension at the first two iterations, \textit{i.e.,} computations of \eqref{eq:IterationStep} for a set of 2-cycles, of one instance of the FAUST dataset can be seen in Fig.~\ref{fig:successFaust}. %
We see that for increasing dimension the success probability is decreasing and less runs end up in the ground state. One possible way to reverse this trend would be to increase the annealing time, which we left constant at $T_A= 20\mu s$. We also plotted the fraction of QUBOs, where the ground state was among the returned solutions. Leaving the number of anneals constant this probability declined from $1$ for $4-24$ worst vertices to $0.4$ for $40$ worst vertices. To get the optimum for more instances we performed the experiment with $5000$ anneals for $40$ and $50$ worst vertices. In this experiment we found the optimum in $90\%$ or $45\% $ of the cases, respectively.

Note that quantum annealing is a stochastic algorithm. Therefore a success probability $P_s$ is directly linked with the amount of time needed to to get the optimum with, \textit{e.g.,} $99 \%$ probability: 
\begin{equation}
    T= \frac{\ln(1- 99\%) }{\ln(1-P_s )} T_A, \label{eq:TimeStochastic}
\end{equation}
where $T_A$ is the time for one anneal. 
We also computed the binomial proportion confidence interval for the probability that the optimum of the QUBO is found, with at least one anneal. If the underlying distribution is binomial, then the true probability lies within $20\%$ of our estimates in $95\%$ of cases. If the number of worst vertices is $23$ or less the estimate is even closer to the true probability.
The binomial confidence interval can be seen in Fig.~\ref{fig:successFaust}, on the right.
In the left plot of Fig.~\ref{fig:successFaust} the standard deviation of the success probability for averaging over the 20 individual QUBOs is also depicted. This shows that the success probability strongly varies for the distinct QUBO instances.

\begin{figure}
    \centering
    % This file was created by matlab2tikz.
%
%The latest updates can be retrieved from
%  http://www.mathworks.com/matlabcentral/fileexchange/22022-matlab2tikz-matlab2tikz
%where you can also make suggestions and rate matlab2tikz.
%
\definecolor{mycolor1}{rgb}{0.00000,0.44700,0.74100}%
\definecolor{mycolor2}{rgb}{0.85000,0.32500,0.09800}%
\definecolor{mycolor3}{rgb}{0.92900,0.69400,0.12500}%
\definecolor{mycolor4}{rgb}{0.49400,0.18400,0.55600}%
\begin{tikzpicture}

\begin{axis}[%
width=.35\linewidth,
height=.3\linewidth,
at={(0.745in,0.481in)},
scale only axis,
xmin=0,
xmax=50,
ymin=0,
ymax=1,
xmajorgrids,
ymajorgrids,
ymode=log,
axis background/.style={fill=white},
axis x line*=bottom,
axis y line*=left,
xlabel style={font=\color{white!15!black}},
xlabel={Number worst vertices},
x label style={at={(axis description cs:0.5,0.05)}, anchor=north,font=\footnotesize},
x tick label style = {font = \footnotesize},
ylabel style={font=\footnotesize{\color{white!15!black}}},
y tick label style = {font = \footnotesize},
ylabel={Probability},
y label style={at={(axis description cs:0.15,.5)},rotate=0,anchor=south},
legend style={at={(1,1.4)}, legend cell align=left, align=left, draw=white!15!black}
]
\addplot[only marks, mark=*, mark options={fill=mycolor1}, fill, mark size=3.0pt, draw=mycolor1, error bars/.cd, 
    y fixed,
    y dir=both, 
    y explicit,error bar style={line width=1.2pt,solid,mycolor1}]
  table[x=x, y=y,y error=error, row sep=crcr, row sep=crcr]{%
x	y       error \\
4	0.8005  0.235647087824144\\
8	0.468   0.251415989944952\\
12	0.2276  0.198280205769512\\
16	0.1431  0.198280205769512\\
20	0.0531  0.048585903305383\\
24	0.0328  0.027628970302927\\
28	0.0139  0.017871485668517\\
32	0.0159  0.031796068939415\\
36	0.0127  0.03349193932874\\
40	0.0032  0.0044\\
};
\addlegendentry{\footnotesize 500 Anneals}

%\addplot[only marks, mark=diamond*, mark options={fill=mycolor2}, fill, mark size=3.0pt, draw=mycolor2] table[row sep=crcr]{%
%x	y\\
%4	1\\
%8	1\\
%12	1\\
%16	1\\
%20	1\\
%24	0.95\\
%28	0.75\\
%32	0.75\\
%36	0.7\\
%40	0.55\\
%};
%\addlegendentry{\footnotesize Optimum - 500 Anneals}

\addplot[only marks, mark=*, mark options={fill=mycolor3}, fill, mark size=3.0pt, draw=mycolor3, error bars/.cd, 
    y fixed,
    y dir=both, 
    y explicit,error bar style={line width=1.2pt,solid,mycolor3}]
  table[x=x, y=y,y error=error, row sep=crcr, row sep=crcr]{%
x	y       error \\
40	0.0047  0.006814249775287\\
50	0.00074 0.001436802004453\\
};
\addlegendentry{\footnotesize 5000 Anneals}

%\addplot[only marks, mark=diamond*, mark options={fill=mycolor4}, fill, mark %size=3.0pt, draw=mycolor4] table[row sep=crcr]{%
%x	y\\
%40	0.9\\
%50	0.45\\
%};
%\addlegendentry{\footnotesize Optimum - 5000 Anneals}

\end{axis}

\end{tikzpicture}%% This file was created by matlab2tikz.
%
%The latest updates can be retrieved from
%  http://www.mathworks.com/matlabcentral/fileexchange/22022-matlab2tikz-matlab2tikz
%where you can also make suggestions and rate matlab2tikz.
%
\definecolor{mycolor1}{rgb}{0.00000,0.44700,0.74100}%
\definecolor{mycolor2}{rgb}{0.85000,0.32500,0.09800}%
\definecolor{mycolor3}{rgb}{0.92900,0.69400,0.12500}%
\definecolor{mycolor4}{rgb}{0.49400,0.18400,0.55600}%
\begin{tikzpicture}

\begin{axis}[%
width=.35\linewidth,
height=.3\linewidth,
at={(0.745in,0.481in)},
scale only axis,
xmin=0,
xmax=51,
ymin=0,
ymax=1,
xmajorgrids,
ymajorgrids,
axis background/.style={fill=white},
axis x line*=bottom,
axis y line*=left,
xlabel style={font=\color{white!15!black}},
xlabel={Number worst vertices},
x label style={at={(axis description cs:0.5,0.05)}, anchor=north,font=\footnotesize},
x tick label style = {font=\footnotesize},
ylabel style={font=\footnotesize{\color{white!15!black}}},
ylabel={Probability},
y tick label style ={font=\footnotesize},
y label style={at={(axis description cs:0.25,.5)},rotate=0,anchor=south},
legend style={at={(1.2,1.4)}, legend cell align=left, align=left, draw=white!15!black}
]
%\addplot[only marks, mark=*, mark options={fill=mycolor1}, fill, mark size=3.0pt, draw=mycolor1, error bars/.cd, 
%    y fixed,
%    y dir=both, 
%    y explicit,error bar style={line width=1.2pt,solid,mycolor1}]
%  table[x=x, y=y,y error=error, row sep=crcr, row sep=crcr]{%
%x	y       error \\
%4	0.8005  0.235647087824144\\
%8	0.468   0.251415989944952\\
%12	0.2276  0.198280205769512\\
%16	0.1431  0.198280205769512\\
%20	0.0531  0.048585903305383\\
%24	0.0328  0.027628970302927\\
%28	0.0139  0.017871485668517\\
%32	0.0159  0.031796068939415\\
%36	0.0127  0.03349193932874\\
%40	0.0032  0.0044\\
%};
%\addlegendentry{\footnotesize Success - 500 Anneals}

\addplot[only marks, mark=diamond*, mark options={fill=mycolor2}, fill, mark size=3.0pt, draw=mycolor2, error bars/.cd, 
    y fixed,
    y dir=both, 
    y explicit,error bar style={line width=1.2pt,solid,mycolor2}]
  table[x=x, y=y,y error=error, row sep=crcr, row sep=crcr]{%
x	y       error\\
4	1       0\\
8	1       0\\
12	1       0\\
16	1       0\\
20	1       0\\
24	0.95    0.095518584579128\\
28	0.75    0.189776183964163\\
32	0.75    0.189776183964163\\
36	0.7     0.200840235012808\\
40	0.55    0.218036235520612\\
};
\addlegendentry{\footnotesize 500 Anneals}

%\addplot[only marks, mark=*, mark options={fill=mycolor3}, fill, mark size=3.0pt, draw=mycolor3, error bars/.cd, 
%    y fixed,
%    y dir=both, 
%    y explicit,error bar style={line width=1.2pt,solid,mycolor3}]
%  table[x=x, y=y,y error=error, row sep=crcr, row sep=crcr]{%
%x	y       error \\
%40	0.0047  0.006814249775287\\
%50	0.00074 0.001436802004453\\
%};
%\addlegendentry{\footnotesize Success - 5000 Anneals}

\addplot[only marks, mark=diamond*, mark options={fill=mycolor4}, fill, mark size=3.0pt, draw=mycolor4, error bars/.cd, 
    y fixed,
    y dir=both, 
    y explicit,error bar style={line width=1.2pt,solid,mycolor4}]
  table[x=x, y=y,y error=error, row sep=crcr, row sep=crcr]{%
x	y       error\\
40	0.9     0.131480797076988 \\
50	0.45    0.218036235520612\\
};
\addlegendentry{\footnotesize 5000 Anneals}

\end{axis}

\end{tikzpicture}%
    \caption{Success probability (left) and the fraction of executions where the best solution is the optimum (right), for different problem sizes and number of anneals. The success probability decreases with the problem size, and, therefore, more anneals are necessary. In the left plot the deviation bar is the standard deviation and in the right plot it is the binomial proportion confidence interval. 
    No lower part of the deviation bar means it goes beyond zero in the left plot.} 
    \label{fig:successFaust}
\end{figure}
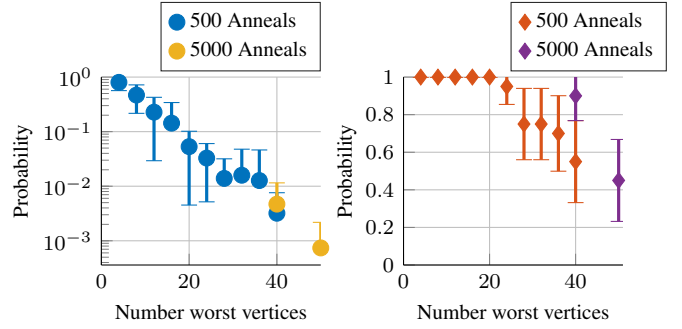

\subsection{Minor Embedding} 
In most cases, our problems result in fully connected  logical qubit graphs. 
In Figs.~\ref{fig:ConnectionA} and \ref{fig:ConnectionB}, there are illustrations of the minor embeddings computed by the method of Cai \textit{et al.}~\cite{Cai2014} (used in Leap 2) and visualized by the problem inspector of Leap 2 \cite{DWave_Leap}. 
We plot the average maximum chain lengths and average numbers of physical qubits in the obtained minor embeddings in Fig.~\ref{fig:emb}. 

\begin{figure*}
    \centering
    \includegraphics[width=0.24\textwidth]{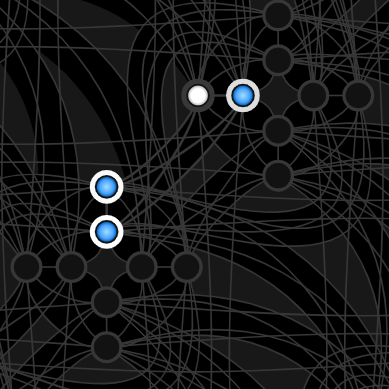}
      \includegraphics[width=0.24\textwidth]{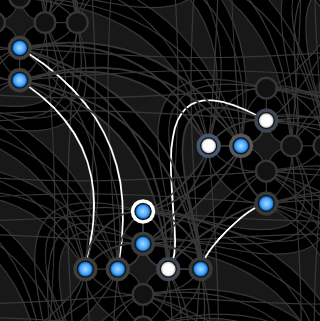}
    \includegraphics[width=0.24\textwidth]{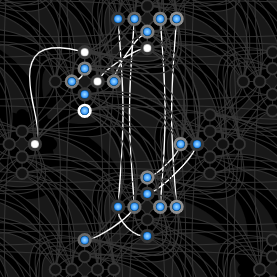}
        \includegraphics[width=0.24\textwidth]{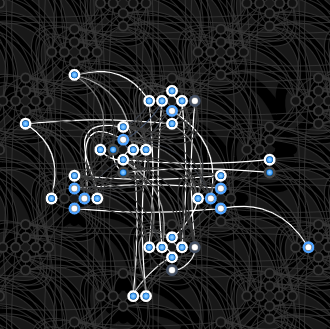}
          \caption{Illustration of the embedding from the D-Wave Leap 2 problem inspector \cite{DWave_Leap} for using $8,16,24,32$ worst vertices. One node depicts a physical qubits. The inner color shows the measured value in the lowest energy state, while the color of the outer ring shows the sign of the bias, \textit{e.g.,} the coefficient of the linear term in the optimization problem \eqref{eq:acq}. The edges depict the chains. 
    }
        \label{fig:ConnectionA}

\end{figure*}
\begin{figure*}
    \centering
          \includegraphics[width=0.45\textwidth]{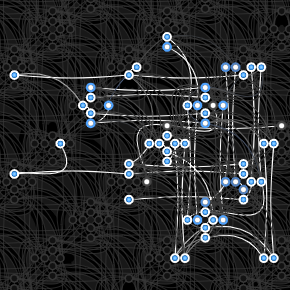}
          \includegraphics[width=0.45\textwidth]{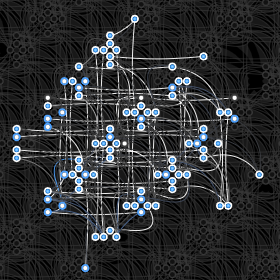}
    \caption{Illustration of the embedding from the D-Wave Leap 2 problem inspector \cite{DWave_Leap} for using $40$ and $50$ worst vertices. One node depicts a physical qubits. The inner color shows the measured value in the lowest energy state, while the color of the outer ring shows the sign of the bias, \textit{e.g.,} the coefficient of the linear term in the optimization problem \eqref{eq:acq}. The edges depict the chains. 
    }
       \label{fig:ConnectionB}
\end{figure*}

\begin{figure}
    \centering
    % This file was created by matlab2tikz.
%
%The latest updates can be retrieved from
%  http://www.mathworks.com/matlabcentral/fileexchange/22022-matlab2tikz-matlab2tikz
%where you can also make suggestions and rate matlab2tikz.
%
\definecolor{mycolor1}{rgb}{0.00000,0.44700,0.74100}%
\begin{tikzpicture}

\begin{axis}[%
width=.3\linewidth,
height=.3\linewidth,
at={(0.745in,0.481in)},
scale only axis,
xmin=0,
xmax=50,
ymin=0,
ymax=4,
xmajorgrids,
ymajorgrids,
axis background/.style={fill=white},
axis x line*=bottom,
axis y line*=left,
xticklabels={},
extra x ticks={0,10,20,30,40,50},
x tick label style = {},
xlabel style={font=\color{white!15!black}},
xlabel={\# worst vertices},
x label style={at={(axis description cs:0.5,0.05)}, anchor=north},
ylabel style={font=\color{white!15!black}},
ylabel={\small Avg. Maximum Chain Length},
y label style={at={(axis description cs:0.3,.5)},rotate=0,anchor=south},
legend style={at={(1,1.1)}, legend cell align=left, align=left, draw=white!15!black}
]
\addplot[only marks, mark=*, mark options={color=mycolor1}, fill, mark size=2.000pt, draw=mycolor1] table[row sep=crcr]{%
x	y\\
4	1\\
8	1\\
12	1.25\\
16	1.5\\
20	2.25\\
24	2.25\\
28	2.5\\
32	2.5\\
36	3.25\\
40	3.25\\
50	3.5\\
};

\end{axis}

\end{tikzpicture}%
    % This file was created by matlab2tikz.
%
%The latest updates can be retrieved from
%  http://www.mathworks.com/matlabcentral/fileexchange/22022-matlab2tikz-matlab2tikz
%where you can also make suggestions and rate matlab2tikz.
%
\definecolor{mycolor1}{rgb}{0.00000,0.44700,0.74100}%
\definecolor{mycolor2}{rgb}{0.85000,0.32500,0.09800}%
\definecolor{mycolor3}{rgb}{0.92900,0.69400,0.12500}%
\definecolor{mycolor4}{rgb}{0.49400,0.18400,0.55600}%
\begin{tikzpicture}

\begin{axis}[%
width=.3\linewidth,
height=.3\linewidth,
at={(0.745in,0.481in)},
scale only axis,
xmin=0,
xmax=50,
ymin=0,
ymax=100,
xmajorgrids,
ymajorgrids,
axis background/.style={fill=white},
axis x line*=bottom,
axis y line*=left,
xlabel style={font=\color{white!15!black}},
xlabel={\# worst vertices},
xticklabels={},
extra x ticks={0,10,20,30,40,50},
x tick label style = {},
x label style={at={(axis description cs:0.5,0.05)}, anchor=north},
ylabel style={font=\color{white!15!black}},
ylabel={Average \# of qubits},
y label style={at={(axis description cs:0.3,.5)},rotate=0,anchor=south},
legend style={at={(0.8,1.1)}, legend cell align=left, align=left, draw=white!15!black}
]
\addplot [color=mycolor2, line width = 1.75pt]
  table[row sep=crcr]{%
4	2\\
50	25\\
};
\addlegendentry{\footnotesize logical}
\addplot [color=mycolor1, line width = 1.75pt]
  table[row sep=crcr]{%
4	2\\
8	4\\
12	8\\
16	12\\
20	18.25\\
24	24\\
28	31.75\\
32	39.5\\
36	49.75\\
40	58.5\\
50	89.75\\
};
\addlegendentry{\footnotesize physical}

\end{axis}

\end{tikzpicture}%
    \caption{Average maximal chain length and number of physical qubits for increasing problem size averaged over 4 instances. 
    The number of logical qubits increases linearly with the problem size.}

    \label{fig:emb}
\end{figure}
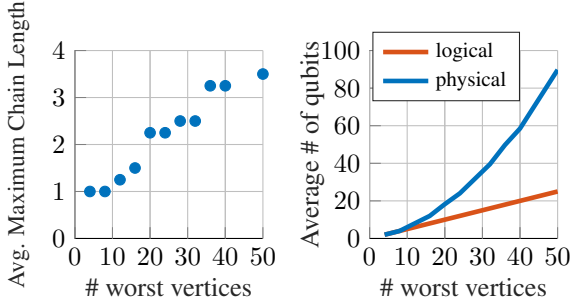

\section{Derivations and Proofs}\label{sec:derivations_and_proofs} 

\subsection{Derivation of Eq.~{\large \eqref{eq:Parametrization}} in Sec.~\ref{subsec:cyclic_alpha_expansion}}  %

It is stated in the main paper in Eq.~\eqref{eq:Parametrization}, that one can convert the multiplication of cycles from \eqref{eq:IterationStep} into an additive structure, \textit{i.e.,} that
\begin{align*} 
P(\alpha) &=\left(\prod_i c_i^{\alpha_i} \right) P_0= P_0 + \sum_{i=1}^m \alpha_i(c_i - I) P_0
\end{align*} 
holds, where $I$ is the identity. 

\begin{proof}
Consider the case where we only have a single  cycle $c$. Now, the following holds: 
\begin{align*}
P(\alpha) &= P_0 +  \alpha(c - I) P_0 = (1-\alpha)P_0 + \alpha c P_0\\
&= \begin{cases}
P_0,  &\text{ for } \alpha=0 \\ 
cP_0, &\text{ for } \alpha=1 
\end{cases} \; = \; c^\alpha P_0. 
\end{align*}
Independent of $P_0$, we can write:
\begin{equation}
    c^\alpha = (1-\alpha) I + \alpha c.
\end{equation}
Additionally we can write \eqref{eq:Parametrization} independent of $P_0$ by applying the inverse permutation from the right side:
\begin{align*} 
\prod_i c_i^{\alpha_i}  = I + \sum_{i=1}^m \alpha_i(c_i - I).
\end{align*}
Now as an induction step we apply $c_{m+1}^{\alpha_{m+1}}$ from the right:
\begin{align} 
&c_{m+1}^{\alpha_{m+1}} \left(\prod_i c_i^{\alpha_i} \right)\nonumber \\ &=c_{m+1}^{\alpha_{m+1}} (I + \sum_{i=1}^m \alpha_i(c_i - I)) \nonumber\\
&=((1-\alpha_{m+1}) I + \alpha_{m+1} c_{m+1}) (I + \sum_{i=1}^m \alpha_i(c_i - I))\nonumber \\
&=I + \sum_{i=1}^{m+1} \alpha_i(c_i - I)+\alpha_{m+1}(c_{m+1} - I)   \sum_{i=1}^{m} \alpha_i(c_i - I).  \label{eq:ProofPara}
\end{align}

 We want to use that for two disjoint cycles $c_k, c_l$, the equality $(c_k- I)(c_l- I) = 0$ holds. In all the places where $c_k$ has $0$ on the diagonal, $c_l$ has $1$, because they are disjoint. This leads to the fact that in the rows, where $(c_k- I)$ is non-zero, $(c_l- I)$ has zero columns or rows, respectively. Therefore, the last term in \eqref{eq:ProofPara} vanishes and the statement is proven.

\end{proof}

\subsection{Proof of Lemma \ref{lemma1}}

To prove Lemma \ref{lemma1}, we first show that the statement is correct for a $k$-cycle.

\begin{lemma}\label{lemmaS} 
Let $P$ be a k-cycle. Then, $P$ can be written as a product of $Q$ and $R$, where $Q$ and $R$ are permutations that only consist of disjoint 2-cycles. 
\end{lemma}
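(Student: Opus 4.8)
\textbf{Proof plan for Lemma \ref{lemmaS}.}

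The plan is to exhibit an explicit factorization of a single $k$-cycle $P = (i_1~i_2~\cdots~i_k)$ into two permutations $Q$ and $R$, each of which is a product of disjoint $2$-cycles (i.e.\ an involution). The natural candidates come from the classical ``pancake'' or ``reversal'' decomposition of a cycle: pair up the indices from the two ends moving inward. Concretely, I would set
\begin{equation*}
Q = (i_1~i_k)(i_2~i_{k-1})(i_3~i_{k-2})\cdots, \qquad R = (i_1~i_{k-1})(i_2~i_{k-2})(i_3~i_{k-3})\cdots,
\end{equation*}
where in each product the transpositions are taken over all pairs $(i_a, i_b)$ with $a+b$ equal to the appropriate constant ($k+1$ for $Q$, using indices $1,\dots,k$; and $k$ for $R$, using indices $1,\dots,k-1$ and fixing $i_k$), discarding any degenerate ``pair'' $(i_a,i_a)$ that shows up when $k$ is odd. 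By construction the $2$-cycles within $Q$ are pairwise disjoint, and likewise for $R$, so both $Q$ and $R$ are products of disjoint $2$-cycles as required.

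The main work is then to verify $QR = P$ (or $RQ = P$, depending on the composition convention; I would fix the same convention used in Sec.~\ref{ssec:cyclic_permutations}, where $X_{i_{j+1}i_j}=1$, i.e.\ $i_j \mapsto i_{j+1}$). First I would handle the cases $k=1$ (trivial, $P=I=Q=R$) and $k=2$ (take $Q=P$, $R=I$, or any such split) separately, then assume $k \ge 3$. The cleanest way to check the general identity is to track the image of an arbitrary $i_j$ under $R$ then $Q$ (or vice versa): $R$ is the reversal $i_j \mapsto i_{k-j}$ for $1\le j \le k-1$ and $i_k\mapsto i_k$, while $Q$ is the reversal $i_j \mapsto i_{k+1-j}$ for $1 \le j \le k$. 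Composing, $i_j \xmapsto{R} i_{k-j} \xmapsto{Q} i_{k+1-(k-j)} = i_{j+1}$ for the relevant range, and one checks the boundary indices $i_k$ and $i_1$ separately to confirm the cyclic wraparound $i_k\mapsto i_1$. This is a short index computation; the only subtlety is bookkeeping the odd-$k$ fixed point (the middle index of a reversal), which one confirms is mapped correctly by examining the two boundary/middle cases explicitly.

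The step I expect to be the main obstacle — really the only one requiring care — is getting the indexing of the two reversals exactly right so that their composition shifts by one rather than, say, fixing everything or shifting by two, and making sure the off-by-one between ``reverse $i_1,\dots,i_k$'' and ``reverse $i_1,\dots,i_{k-1}$ and fix $i_k$'' is handled consistently with the paper's left-vs-right multiplication convention. Once the two involutions are written down correctly, the verification is a one-line substitution. After Lemma \ref{lemmaS} is established, Lemma \ref{lemma1} follows by decomposing an arbitrary permutation into its disjoint cycles, applying Lemma \ref{lemmaS} to each cycle to write it as $Q_t R_t$ with $Q_t, R_t$ supported on that cycle's indices, and then collecting $Q := \prod_t Q_t$ and $R := \prod_t R_t$; since cycles on disjoint index sets commute and their $2$-cycles remain mutually disjoint across cycles, $Q$ and $R$ are again products of disjoint $2$-cycles and $QR = P$.
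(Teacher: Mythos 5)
Your proposal is correct and follows essentially the same route as the paper: both write the $k$-cycle as a product $QR$ of two explicit involutions obtained by pairing indices symmetrically (two ``reflections''), and verify the composition by a direct index computation. Your choice of reflections ($i_j\mapsto i_{k+1-j}$ and $i_j\mapsto i_{k-j}$) differs from the paper's only by a cyclic shift, and your uniform check $i_j\xmapsto{R} i_{k-j}\xmapsto{Q} i_{j+1}$ even avoids the paper's separate even/odd case formulas.
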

\begin{proof}
Without loss of generality, let $P= (1 \; 2 \; 3 ... k  )$ be the k-cycle. %
This is possible because rearranging rows and columns does not change the problem.
For even $k$, $Q$ and $R$ take the following form:
\begin{align*}
    Q&= (1\;2)(3\;k)(4 \;(k-1))\hdots\Big( \Big(1+\frac{k}{2}\Big) \;\Big(2+ \frac{k}{2}\Big)  \Big),  \\ 
    R&= (2 \;k)(3 \;(k-1))\hdots \Big( \frac{k}{2} \;  \Big(2+\frac{k}{2}\Big) \Big). 
\end{align*}
It can be easily checked that $P = QR$ holds in this case. 
For uneven $k$, one can choose the following $Q$ and $R$, and the same holds: 
\begin{align*}
    Q&= (1\;2)(3\;k)(4 \;(k-1))\hdots\Big( \Big(\frac{k+1}{2}\Big) \;\Big(1+ \frac{k+1}{2}\Big)  \Big), \\
    R&= (2 \;k)(3 \;(k-1))\hdots \Big( \frac{k-1}{2} \; \Big(1+\frac{k+1}{2}\Big) \Big). \\[-35pt]
\end{align*}
\end{proof}
Next, the following argument gives the proof for Lemma \ref{lemma1}. 
\begin{proof} 
One can first write the permutation $P$ in cycle notation. Then, we decompose each cycle individually as it was shown in Lemma \ref{lemmaS}. Note that according to Lemma \ref{lemmaS}, the decomposition of the $k$-cycle does not require additional elements in $Q$ or $R$ that do not occur in the cycle. 
\end{proof} 

Permutations like $Q$ or $R$ that only consist of 2-cycles are called involutions.
The fraction of permutations that are involutions has following asymptotic behavior \cite{flajolet2009analytic} (Prop.~VIII.2.): 
\begin{equation}
    \frac{I_n}{n!}= \frac{e^{-\frac{1}{4}}}{2 \sqrt{\pi n }}n^{-\frac{n}{2}} e^{\frac{n}{2} + \sqrt{n}}\left( 1 + O\left(\frac{1}{n^{\frac{1}{5}}}\right) \right). 
\end{equation}
Considering Lemma \ref{lemmaS}, a rough estimate is  $I_n^2 >n !$. 
For these permutations, in theory, one step of the cyclic  $\alpha$-expansion would suffice to obtain to the  identity  (which could be the optimum w.l.o.g.). 
\paragraph{Note on Classical Optimization:} 

If \eqref{eq:cyclicalpha} was submodular, it would be possible to efficiently solve it by converting it to a graph cut problem instead of using AQCing. 
According to Bach \cite{bach2019submodular}, a quadratic function $f(\mathbf{ x}):=  \mathbf{x}^T W \mathbf{x}$ is submodular, if and only if all non-diagonal elements of $W$ are non-positive. Since $\tilde{W}_{ij}$ can have a positive sign, the function is not submodular and, therefore, cannot be efficiently optimized. 
We tried small alternations of $\tilde{W}$ (\eg, changing the sign of the off-diagonal elements by switching $C_i$ to $-C_i$), but did not succeed in making  \eqref{eq:cyclicalpha} submodular. 
\subsection{Simulated Annealing \textit{\textbf{vs}} Quantum Annealing for the Subproblems}
We also performed Q-Match from Alg.~\ref{alg:overview} with a simulated annealing solver from \cite{DWave_Neal} for the subproblems. The quality of the results for the FAUST dataset and for QAPLIB can be seen in Figs.~\ref{fig:faust+sim} and \ref{fig:Qaplibruntime_convergence+Sim}. Here we executed the simulated annealing sampler with $5$ sweeps and performed $100$ runs. Increasing the number of sweeps further did not yield qualitatively different results.

In Fig.~\ref{fig:TimeSimulated} the processing time for the subproblems is plotted in dependence of the dimension. 
If one measures the wall clock time for a query to the D-Wave sampler in the same way one gets results of the order of seconds. This is due to the fact that the time the solver takes for the computation is overshadowed by the time to connect and get access to the machine. To get this time estimate one can simply look at \texttt{dwave ping} in ocean. For the QPU access time we already presented the results in dependence of dimension in Fig.~\ref{fig:runtime}. A more detailed overview of the different runtimes is given in \cite{DWave_Leap_Time}.

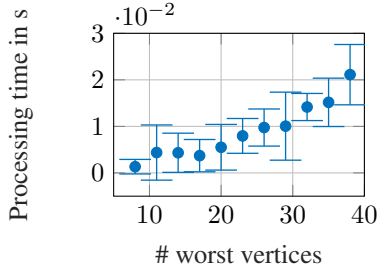
\begin{figure}
    \centering
    % This file was created by matlab2tikz.
%
%The latest updates can be retrieved from
%  http://www.mathworks.com/matlabcentral/fileexchange/22022-matlab2tikz-matlab2tikz
%where you can also make suggestions and rate matlab2tikz.
%
\definecolor{mycolor1}{rgb}{0.00000,0.44700,0.74100}%
\begin{tikzpicture}

\begin{axis}[%
width=.4\linewidth,
height=.26\linewidth,
at={(0.5in,0.1in)},
scale only axis,
xmin=5,
xmax=40,
xlabel style={font=\color{white!15!black}},
xlabel={\# worst vertices},
ymin=-0.005,
ymax=0.03,
ylabel style={font=\color{white!15!black}},
ylabel={Processing time in s},
axis background/.style={fill=white},
xmajorgrids,
ymajorgrids
]
\addplot [color=mycolor1, only marks, mark=*, mark options={solid, mycolor1}, forget plot]
 plot [error bars/.cd, y dir=both, y explicit, error bar style={line width=0.5pt}, error mark options={line width=0.5pt, mark size=6.0pt, rotate=90}]
 table[row sep=crcr, y error plus index=2, y error minus index=3]{%
8	0.001339285	0.001560727679651	0.001560727679651\\
11	0.004375	0.005929270612816	0.005929270612816\\
14	0.004326921	0.004209589497525	0.004209589497525\\
17	0.003710939	0.003469511240296	0.003469511240296\\
20	0.00550987	0.004904426188384	0.004904426188384\\
23	0.007954535	0.003740214428604	0.003740214428604\\
26	0.00975	0.003987828704668	0.003987828704668\\
29	0.010044636	0.007309098793304	0.007309098793304\\
32	0.0141633	0.002914422077645	0.002914422077645\\
35	0.015165444	0.005199324931632	0.005199324931632\\
38	0.021114866	0.006481331652678	0.006481331652678\\
};
\end{axis}
\end{tikzpicture}%
    \caption{Processing time of the simulated annealing sampler in dependence of the dimension. For one execution we have $100$ runs and $5$ sweeps as parameters. We averaged the measured time over $10$ executions. 
    }
    \label{fig:TimeSimulated}
\end{figure}

\begin{figure}
    \centering
    % This file was created by matlab2tikz.
%
%The latest updates can be retrieved from
%  http://www.mathworks.com/matlabcentral/fileexchange/22022-matlab2tikz-matlab2tikz
%where you can also make suggestions and rate matlab2tikz.
%
\definecolor{mycolor1}{rgb}{0.00000,0.44700,0.74100}%

\begin{tikzpicture}

\begin{axis}[%
width=.38\linewidth,
height=.3\linewidth,
at={(0.5in,0.1in)},
scale only axis,
xmin=5,
xmax=40,
xlabel style={font=\color{white!15!black}},
xlabel={\# worst vertices},
ymin=0,
ymax=0.5,
ylabel style={font=\color{white!15!black}},
ylabel={Processing time in s},
axis background/.style={fill=white},
xmajorgrids,
ymajorgrids
]
\addplot [color=mycolor1, only marks, mark=*, mark options={solid, mycolor1}, forget plot]
 plot [error bars/.cd, y dir=both, y explicit, error bar style={line width=0.5pt}, error mark options={line width=0.5pt, mark size=6.0pt, rotate=90}]
 table[row sep=crcr, y error plus index=2, y error minus index=3]{%
8	0.01875	0.006987712429687	0.006987712429687\\
11	0.03125	0	0\\
14	0.03125	0	0\\
17	0.046875	0.01104854345604	0.01104854345604\\
20	0.075	0.013072812914595	0.013072812914595\\
23	0.08125	0.017116329922037	0.017116329922037\\
26	0.121875	0.013072812914595	0.013072812914595\\
29	0.18125	0.02614562582919	0.02614562582919\\
32	0.25	0.024705294220066	0.024705294220066\\
35	0.309375	0.092702481088696	0.092702481088696\\
38	0.425	0.065736512399883	0.065736512399883\\
};
\end{axis}
\end{tikzpicture}%
    \caption{Time to compute the minor embedding in dependence of the dimension of the subproplem. The subproblems stem from the Q-Match algorithm applied to the Faust dataset. We averaged the measured time over $5$ executions. 
    }
    \label{fig:timeMinorEmbedding}
\end{figure}
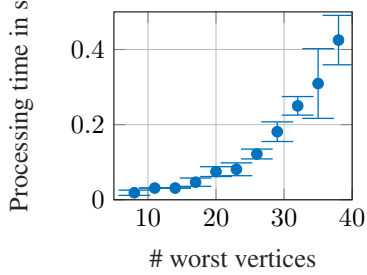

Since the minor embedding is computed locally one can directly measure the processing time in dependence of the dimension. In Fig.~\ref{fig:timeMinorEmbedding}, this measurement is averaged over 5 instances. 
The simulated annealing sampler takes here already an order of magnitude less time. However since we mostly want to embed a fully connected graph the problem to find an embedding could be solved beforehand and an existing embedding can be reused.

We also state that the emphasis of this work is not on benchmarking the subproblems. If one would do this one could also optimize over the annealing path and could use further features like the extended $J$-range for more precision and spin reversal transformations to mitigate some systematic errors.
Additionally one would also apply formula \eqref{eq:TimeStochastic}. In this section we only want to provide a rough overview of the computing time.
Preliminary work that does an in depth optimization of these algorithms confirms that quantum annealing is highly promising:
In \cite{PhysRevX.6.031015} QUBOs were found where quantum annealing yields a "time-to-$99\%$-
success-probability that is $\sim$ $10^8$
times faster than simulated annealing running on a single processor core". Reference \cite{junger2019performance} benchmarks the D-Wave 2000Q on a broader class of problems and confirms the potential of this technology.

\begin{figure}
    \centering
    % This file was created by matlab2tikz.
%
%The latest updates can be retrieved from
%  http://www.mathworks.com/matlabcentral/fileexchange/22022-matlab2tikz-matlab2tikz
%where you can also make suggestions and rate matlab2tikz.
%
\definecolor{mycolor1}{rgb}{0.00000,0.44700,0.74100}%
\definecolor{mycolor2}{rgb}{0.85000,0.32500,0.09800}%
\definecolor{mycolor3}{rgb}{0.92900,0.69400,0.12500}%
\definecolor{mycolor4}{rgb}{0.49400,0.18400,0.55600}%
\begin{tikzpicture}

\begin{axis}[%
width=.5\linewidth,
height=.35\linewidth,
at={(0.797in,0.617in)},
scale only axis,
xmin=0,
xmax=0.15,
ymin=30,
ymax=100,
xmajorgrids,
ymajorgrids,
%every x tick label/.append style={font=\color{black}, font=\footnotesize},
%every y tick label/.append style={font=\color{black}, font=\footnotesize},
axis background/.style={fill=white},
%axis x line*=bottom,
%axis y line*=left,
%x label style={at={(axis description cs:0.5,0.02)},anchor=north},
%y label style={at={(axis description cs:0.1,.5)},rotate=0,anchor=south},
xlabel style={font=\color{white!15!black}},
xlabel={Geodesic error},
x label style = {font=\footnotesize},
x tick label style = {font=\footnotesize},
x label style={at={(axis description cs:0.5,0.05)}, anchor=north},
y label style = {font=\footnotesize},
ylabel style={font=\color{white!15!black}},
y label style = {font=\footnotesize},
ylabel={\% Correspondences},
y label style={at={(axis description cs:0.19,.5)},rotate=0,anchor=south},
%x tick label style={/pgf/number format/fixed},
title style={font=\bfseries},
legend style={at={(1.7,0.0)}, anchor=south east, legend cell align=left, align=left, draw=white!15!black, font=\footnotesize}
]

\addplot [color=red,line width=3.0pt]
  table[row sep=crcr]{%
0	52.6693227091633\\
0.01	53.3067729083665\\
0.02	59.1235059760956\\
0.03	68.00796812749\\
0.04	77.1713147410359\\
0.05	83.1872509960159\\
0.06	86.9322709163347\\
0.07	90.2390438247012\\
0.08	92.9880478087649\\
0.09	94.7410358565737\\
0.1	96.3346613545817\\
0.11	97.0517928286853\\
0.12	97.9282868525896\\
0.13	98.4860557768924\\
0.14	98.7250996015936\\
0.15	98.8446215139442\\
0.16	99.0438247011952\\
0.17	99.1633466135458\\
0.18	99.2430278884462\\
0.19	99.2828685258964\\
0.2	99.3227091633466\\
0.21	99.4422310756972\\
0.22	99.601593625498\\
0.23	99.8406374501992\\
0.24	99.9601593625498\\
0.25	99.9601593625498\\
0.26	99.9601593625498\\
0.27	99.9601593625498\\
0.28	99.9601593625498\\
0.29	99.9601593625498\\
0.3	99.9601593625498\\
0.31	99.9601593625498\\
0.32	99.9601593625498\\
0.33	99.9601593625498\\
0.34	99.9601593625498\\
0.35	99.9601593625498\\
0.36	99.9601593625498\\
0.37	99.9601593625498\\
0.38	99.9601593625498\\
0.39	99.9601593625498\\
0.4	99.9601593625498\\
0.41	99.9601593625498\\
0.42	99.9601593625498\\
0.43	99.9601593625498\\
0.44	99.9601593625498\\
0.45	99.9601593625498\\
0.46	99.9601593625498\\
0.47	99.9601593625498\\
0.48	99.9601593625498\\
0.49	99.9601593625498\\
0.5	99.9601593625498\\
0.51	99.9601593625498\\
0.52	99.9601593625498\\
0.53	99.9601593625498\\
0.54	99.9601593625498\\
0.55	99.9601593625498\\
0.56	99.9601593625498\\
0.57	99.9601593625498\\
0.58	99.9601593625498\\
0.59	99.9601593625498\\
0.6	99.9601593625498\\
0.61	99.9601593625498\\
0.62	99.9601593625498\\
0.63	99.9601593625498\\
0.64	99.9601593625498\\
0.65	100\\
0.66	100\\
0.67	100\\
0.68	100\\
0.69	100\\
0.7	100\\
0.71	100\\
0.72	100\\
0.73	100\\
0.74	100\\
0.75	100\\
0.76	100\\
0.77	100\\
0.78	100\\
0.79	100\\
0.8	100\\
0.81	100\\
0.82	100\\
0.83	100\\
0.84	100\\
0.85	100\\
0.86	100\\
0.87	100\\
0.88	100\\
0.89	100\\
0.9	100\\
0.91	100\\
0.92	100\\
0.93	100\\
0.94	100\\
0.95	100\\
0.96	100\\
0.97	100\\
0.98	100\\
0.99	100\\
1	100\\
};
\addlegendentry{Q-Match}

\addplot [color=mycolor2, dashed, line width=2.0pt]
  table[row sep=crcr]{%
0	31\\
0.01	31\\
0.02	33\\
0.03	35.6\\
0.04	41.6\\
0.05	45\\
0.06	50\\
0.07	54.6\\
0.08	59.6\\
0.09	64.8\\
0.1	67.6\\
0.11	71\\
0.12	75.4\\
0.13	79.6\\
0.14	81.8\\
0.15	82.4\\
0.16	84.8\\
0.17	85.6\\
0.18	87\\
0.19	88\\
0.2	89.2\\
0.21	89.8\\
0.22	90.4\\
0.23	91\\
0.24	91.4\\
0.25	91.8\\
0.26	92.2\\
0.27	92.6\\
0.28	92.8\\
0.29	93.2\\
0.3	93.2\\
0.31	93.6\\
0.32	94\\
0.33	94\\
0.34	94.6\\
0.35	94.8\\
0.36	95.4\\
0.37	95.8\\
0.38	95.8\\
0.39	96\\
0.4	96\\
0.41	96\\
0.42	96\\
0.43	96.4\\
0.44	97\\
0.45	97\\
0.46	97\\
0.47	97.2\\
0.48	97.2\\
0.49	97.2\\
0.5	97.2\\
};
\addlegendentry{FMs \cite{ovsjanikov2012functional}}

\addplot [color=mycolor1,dashed,line width=3.0pt]
  table[row sep=crcr]{%
0	28.1841638445412\\
0	54.6816479400749\\
0.01	56.3295880149813\\
0.02	64.4943820224719\\
0.03	72.5842696629214\\
0.04	79.625468164794\\
0.05	83.8202247191011\\
0.06	87.4906367041199\\
0.07	89.5880149812734\\
0.08	91.310861423221\\
0.09	93.1835205992509\\
0.1	94.1573033707865\\
0.11	94.9063670411985\\
0.12	95.6554307116105\\
0.13	96.3295880149813\\
0.14	96.8539325842697\\
0.15	97.5280898876404\\
0.16	97.9026217228464\\
0.17	98.2022471910112\\
0.18	98.3520599250936\\
0.19	98.8014981273408\\
0.2	99.1760299625468\\
0.21	99.5505617977528\\
0.22	99.8501872659176\\
0.23	100\\
0.24	100\\
0.25	100\\
0.26	100\\
0.27	100\\
0.28	100\\
0.29	100\\
0.3	100\\
0.31	100\\
0.32	100\\
0.33	100\\
0.34	100\\
0.35	100\\
0.36	100\\
0.37	100\\
0.38	100\\
0.39	100\\
0.4	100\\
0.41	100\\
0.42	100\\
0.43	100\\
0.44	100\\
0.45	100\\
0.46	100\\
0.47	100\\
0.48	100\\
0.49	100\\
0.5	100\\
};
\addlegendentry{SA \cite{holzschuh20simanneal}}

\addplot [color=mycolor4,line width=2.0pt,dashed]
  table[row sep=crcr]{%
0	87.5166002656043\\
0.01	88.3798140770252\\
0.02	91.0358565737052\\
0.03	93.0942895086321\\
0.04	95.0199203187251\\
0.05	96.2151394422311\\
0.06	96.7463479415671\\
0.07	97.0119521912351\\
0.08	97.4103585657371\\
0.09	98.406374501992\\
0.1	98.738379814077\\
0.11	98.937583001328\\
0.12	99.203187250996\\
0.13	99.33598937583\\
0.14	99.535192563081\\
0.15	99.667994687915\\
0.16	99.800796812749\\
0.17	99.867197875166\\
0.18	99.867197875166\\
0.19	99.867197875166\\
0.2	99.867197875166\\
0.21	99.867197875166\\
0.22	99.933598937583\\
0.23	99.933598937583\\
0.24	99.933598937583\\
0.25	99.933598937583\\
0.26	99.933598937583\\
0.27	100\\
0.28	100\\
0.29	100\\
0.3	100\\
0.31	100\\
0.32	100\\
0.33	100\\
0.34	100\\
0.35	100\\
0.36	100\\
0.37	100\\
0.38	100\\
0.39	100\\
0.4	100\\
0.41	100\\
0.42	100\\
0.43	100\\
0.44	100\\
0.45	100\\
0.46	100\\
0.47	100\\
0.48	100\\
0.49	100\\
0.5	100\\
};
\addlegendentry{ZO \cite{melzi19zoomout}}

\addplot [color=mycolor3,line width=3.0pt,dashed]
  table[row sep=crcr]{%
0	54.3824701195219\\
0.01	55.1792828685259\\
0.02	61.195219123506\\
0.03	69.6812749003984\\
0.04	79.1235059760956\\
0.05	85.0199203187251\\
0.06	88.2470119521912\\
0.07	91.5537848605578\\
0.08	94.5019920318725\\
0.09	95.6972111553785\\
0.1	96.9322709163347\\
0.11	97.9681274900398\\
0.12	98.605577689243\\
0.13	98.8446215139442\\
0.14	99.0438247011952\\
0.15	99.203187250996\\
0.16	99.2430278884462\\
0.17	99.4422310756972\\
0.18	99.5219123505976\\
0.19	99.5219123505976\\
0.2	99.5617529880478\\
0.21	99.6414342629482\\
0.22	99.6812749003984\\
0.23	99.8804780876494\\
0.24	99.9203187250996\\
0.25	99.9203187250996\\
0.26	99.9203187250996\\
0.27	99.9203187250996\\
0.28	99.9203187250996\\
0.29	99.9203187250996\\
0.3	99.9203187250996\\
0.31	99.9203187250996\\
0.32	99.9203187250996\\
0.33	99.9203187250996\\
0.34	99.9203187250996\\
0.35	99.9203187250996\\
0.36	99.9601593625498\\
0.37	99.9601593625498\\
0.38	99.9601593625498\\
0.39	99.9601593625498\\
0.4	99.9601593625498\\
0.41	99.9601593625498\\
0.42	99.9601593625498\\
0.43	99.9601593625498\\
0.44	99.9601593625498\\
0.45	99.9601593625498\\
0.46	100\\
0.47	100\\
0.48	100\\
0.49	100\\
0.5	100\\
0.51	100\\
0.52	100\\
0.53	100\\
0.54	100\\
0.55	100\\
0.56	100\\
0.57	100\\
0.58	100\\
0.59	100\\
0.6	100\\
0.61	100\\
0.62	100\\
0.63	100\\
0.64	100\\
0.65	100\\
0.66	100\\
0.67	100\\
0.68	100\\
0.69	100\\
0.7	100\\
0.71	100\\
0.72	100\\
0.73	100\\
0.74	100\\
0.75	100\\
0.76	100\\
0.77	100\\
0.78	100\\
0.79	100\\
0.8	100\\
0.81	100\\
0.82	100\\
0.83	100\\
0.84	100\\
0.85	100\\
0.86	100\\
0.87	100\\
0.88	100\\
0.89	100\\
0.9	100\\
0.91	100\\
0.92	100\\
0.93	100\\
0.94	100\\
0.95	100\\
0.96	100\\
0.97	100\\
0.98	100\\
0.99	100\\
1	100\\
};
\addlegendentry{SA-Match-5}
\end{axis}

\end{tikzpicture}% 
    % This file was created by matlab2tikz.
%
%The latest updates can be retrieved from
%  http://www.mathworks.com/matlabcentral/fileexchange/22022-matlab2tikz-matlab2tikz
%where you can also make suggestions and rate matlab2tikz.
%
\definecolor{mycolor1}{rgb}{0.00000,0.44700,0.74100}%
\definecolor{mycolor2}{rgb}{0.85000,0.32500,0.09800}%
\definecolor{mycolor3}{rgb}{0.92900,0.69400,0.12500}%
\definecolor{mycolor4}{rgb}{0.49400,0.18400,0.55600}%
\begin{tikzpicture}

\begin{axis}[%
width=.5\linewidth,
height=.35\linewidth,
at={(0.5in,0.1in)},
scale only axis,
xmin=1,
xmax=15,
ymin=0,
ymax=30000,
xmajorgrids,
ymajorgrids,
%every x tick label/.append style={font=\color{black}, font=\footnotesize},
%every y tick label/.append style={font=\color{black}, font=\footnotesize},
axis background/.style={fill=white},
%axis x line*=bottom,
%axis y line*=left,
%x label style={at={(axis description cs:0.5,0.02)},anchor=north},
%y label style={at={(axis description cs:0.1,.5)},rotate=0,anchor=south},
xlabel style={font=\color{white!15!black}},
xlabel={\small \# Iteration},
x tick label style = {font=\footnotesize},
y tick label style = {font =\footnotesize},
x label style={at={(axis description cs:0.5,0.1)}, anchor=north},
ylabel style={font=\color{white!15!black}},
ylabel={\small Energy \eqref{eq:DASProblem}},
y label style={at={(axis description cs:0.2,.5)},rotate=0,anchor=south},
%x tick label style={/pgf/number format/fixed},
title style={font=\bfseries},
legend style={at={(1.5,0.48)}, anchor=south east, legend cell align=left, align=left, draw=white!15!black, font=\footnotesize}
]
\addplot [color=mycolor1, line width=2pt]
  table[row sep=crcr]{%
1	28967.7740945007\\
2	20462.8711526334\\
3	17938.8933045271\\
4	15133.3969379716\\
5	12933.9466254587\\
6	11590.26628047\\
7	10428.6223381632\\
8	9700.66895599068\\
9	9200.43498568485\\
10	8782.83810686946\\
11	8478.04747317207\\
12	8149.97998745927\\
13	7964.23038636745\\
14	7853.93685846935\\
15	7741.44210339278\\
16	7585.12901476899\\
17	7506.80118382658\\
18	7506.80118382658\\
};
\addlegendentry{Worst 26}

\addplot [color=mycolor3, line width=2.0pt]
  table[row sep=crcr]{%
1	28967.7740945007\\
2	19785.5638124038\\
3	15965.625080292\\
4	12559.5330345494\\
5	10716.6133549037\\
6	9161.67187490522\\
7	8331.93259485467\\
8	7855.8441526553\\
9	7422.46809373534\\
10	7005.97813927963\\
11	6789.73765964623\\
12	6755.55681831424\\
13	6758.7013013385\\
14	6698.82219891283\\
15	6605.60138840007\\
16	6566.61404866082\\
17	6562.50554973589\\
18	6501.10204693944\\
19	6525.49124386912\\
};
\addlegendentry{Worst 40}

\addplot [color=mycolor4, line width=2.0pt]
  table[row sep=crcr]{%
1	28967.7740945007\\
2	18455.8661354276\\
3	13636.3958207801\\
4	11043.0417990707\\
5	9409.15456174472\\
6	8950.78079372073\\
7	8002.40795811593\\
8	7449.97648451161\\
9	6921.51806768985\\
10	6651.48589420183\\
11	6574.70936789903\\
12	6441.67271594747\\
13	6285.61209839924\\
14	6082.44440415722\\
15	6166.50557865852\\
};
\addlegendentry{Worst 50}

\addplot [color=gray, dashed, line width = 1.5pt]
  table[row sep=crcr]{%
1	5479.47419680382\\
28	5479.47419680382\\
};
%\addlegendentry{Optimum}

\end{axis}

\end{tikzpicture}%
    \caption{Evaluation of cumulative error \cite{kim11bim} (left) and convergence (right) on the FAUST dataset. (Left) We compare against Simulated Annealing \cite{holzschuh20simanneal} without postprocessing and Functional Maps \cite{ovsjanikov2012functional}. Dashed lines indicate non-quantum approaches. The results have symmetry-flipped solutions removed, these have an equivalent final energy for all three methods but are not recognized as correct in the evaluation protocol. (Right) We show the convergence of the energy over $30$ iterations. The larger the set of worst vertices, the faster our method converges. The dashed grey line shows the optimal energy. }
    \label{fig:faust+sim}
\end{figure}
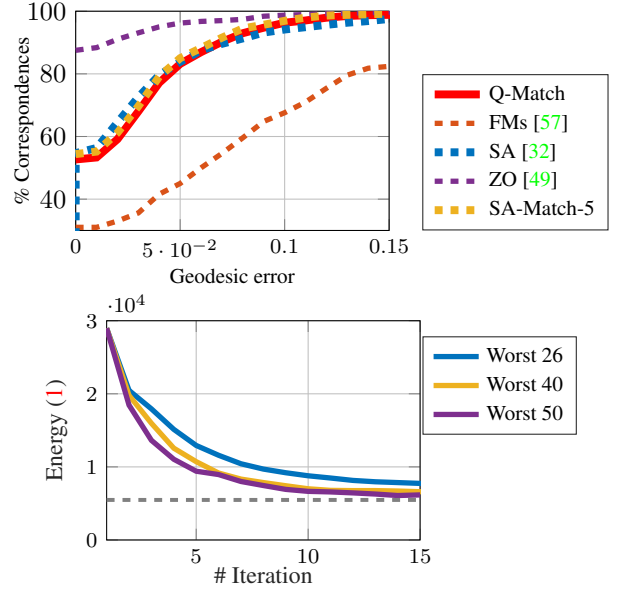

\begin{figure}
    \centering
    % This file was created by matlab2tikz.
%
%The latest updates can be retrieved from
%  http://www.mathworks.com/matlabcentral/fileexchange/22022-matlab2tikz-matlab2tikz
%where you can also make suggestions and rate matlab2tikz.
%
\definecolor{mycolor1}{rgb}{0.00000,0.44700,0.74100}%
\begin{tikzpicture}

\begin{axis}[%
width=.35\linewidth,
height=.18\linewidth,
at={(0.758in,0.481in)},
scale only axis,
xmin=0.65,
xmax=8.35,
xlabel style={font=\color{white!15!black}},
xlabel={Instance in \cite{burkard77qaplib}},
ymin=0,
ymax=1.4,
ylabel style={font=\color{white!15!black}},
ylabel={Relative error in \%},
axis background/.style={fill=white},
xmajorgrids,
ymajorgrids
]
\addplot [color=red, only marks, mark size=2.0pt, mark=*, mark options={solid, red}, forget plot]
  table[row sep=crcr]{%
1	0.443863363720293\\
2	0.276411971967483\\
3	1.07678657476466\\
4	0.025253681738186\\
5	0.303682336284145\\
6	0.398620428530181\\
7	0.410203562813798\\
8	0.765046012922442\\
};
\addplot [color=blue, only marks, mark size=2.0pt, mark=*, mark options={solid, blue}, forget plot]
  table[row sep=crcr]{%
1	0.423519395872607\\
2	0.494099823670483\\
3	0.340477206159437\\
4	1.27817650099117\\
5	0.212627757185561\\
6	0.033976336605285\\
7	0.01867122551638\\
8	0.453156075415945\\
};
\end{axis}
\end{tikzpicture}%%
  % This file was created by matlab2tikz.
%
%The latest updates can be retrieved from
%  http://www.mathworks.com/matlabcentral/fileexchange/22022-matlab2tikz-matlab2tikz
%where you can also make suggestions and rate matlab2tikz.
%
\definecolor{mycolor1}{rgb}{0.00000,0.44700,0.74100}%
\begin{tikzpicture}

\begin{axis}[%
width=.35\linewidth,
height=.18\linewidth,
at={(0.758in,0.481in)},
scale only axis,
xmin=0.5,
xmax=11.5,
xlabel style={font=\color{white!15!black}},
xlabel={Instance in \cite{scrqaplib,hadqaplib,rouqaplib}},
ymin=0,
ymax=15,
axis background/.style={fill=white},
xmajorgrids,
ymajorgrids
]
\addplot [color=red, only marks, mark size=2.0pt, mark=*, mark options={solid, red}, forget plot]
  table[row sep=crcr]{%
1	7.20152817574021\\
2	14.5991396167384\\
3	5.26401890393529\\
4	2.05811138014528\\
5	0.220264317180617\\
6	0.376344086021505\\
7	0.335946248600224\\
8	2.10921698930945\\
9	5.71227200163038\\
10	8.51020581011264\\
11	4.85388451349511\\
};
\addplot [color=blue, only marks, mark size=2.0pt, mark=*, mark options={solid, blue}, forget plot]
  table[row sep=crcr]{%
1	12.8748806112703\\
2	14.0398904966758\\
3	3.90075433972553\\
4	0\\
5	0.881057268722467\\
6	0.806451612903226\\
7	1.04516610675625\\
8	0.722334585379948\\
9	6.93930233348052\\
10	5.36630812230033\\
11	3.99491676337864\\
};
\end{axis}
\end{tikzpicture}%
    % This file was created by matlab2tikz.
%
%The latest updates can be retrieved from
%  http://www.mathworks.com/matlabcentral/fileexchange/22022-matlab2tikz-matlab2tikz
%where you can also make suggestions and rate matlab2tikz.
%
\begin{tikzpicture}

\begin{axis}[%
%width=4.521in,
%height=3.455in,
%at={(0.758in,0.592in)},
%scale only axis,
width=.35\linewidth,
height=.18\linewidth,
at={(0.758in,0.481in)},
scale only axis,
xmin=0,
xmax=14.7,
xlabel style={font=\color{white!15!black}},
xlabel={Instance in \cite{nugqaplib}},
ymin=0,
ymax=9,
ylabel style={font=\color{white!15!black}},
ylabel={Relative error in \%},
axis background/.style={fill=white},
xmajorgrids,
ymajorgrids
]
\addplot [color=red, only marks, mark size=2.0pt, mark=*, mark options={solid, red}, forget plot]
  table[row sep=crcr]{%
1	6.9204152249135\\
2	1.18343195266272\\
3	2.48447204968944\\
4	4.51612903225806\\
5	8.66050808314088\\
6	0.310880829015544\\
7	1.40077821011673\\
8	5.57834290401969\\
9	3.2258064516129\\
10	4.12844036697248\\
11	6.94444444444444\\
12	6.03744745892243\\
13	3.5230352303523\\
14	3.72305682560418\\
};
\addplot [color=blue, only marks, mark size=2.0pt, mark=*, mark options={solid, blue}, forget plot]
  table[row sep=crcr]{%
1	2.7681660899654\\
2	6.11439842209073\\
3	5.21739130434783\\
4	6.61290322580645\\
5	4.04157043879908\\
6	2.38341968911917\\
7	4.3579766536965\\
8	3.19934372436423\\
9	3.28142380422692\\
10	4.07110091743119\\
11	5.23504273504274\\
12	5.23500191058464\\
13	6.73635307781649\\
14	5.81319399085565\\
};
\end{axis}
\end{tikzpicture}%%
    % This file was created by matlab2tikz.
%
%The latest updates can be retrieved from
%  http://www.mathworks.com/matlabcentral/fileexchange/22022-matlab2tikz-matlab2tikz
%where you can also make suggestions and rate matlab2tikz.
%
%
\definecolor{mycolor1}{rgb}{0.00000,0.44700,0.74100}%
\begin{tikzpicture}

\begin{axis}[%
width=.35\linewidth,
height=.18\linewidth,
at={(0.758in,0.481in)},
scale only axis,
xmin=0.55,
xmax=10.45,
xlabel style={font=\color{white!15!black}},
xlabel={Instance in \cite{esc16qaplib}},
ymin=0,
ymax=25,
axis background/.style={fill=white},
xmajorgrids,
ymajorgrids,
legend style={at={(0.5,0.97)}, anchor=north, legend cell align=left, align=left, draw=white!15!black}
]
\addplot [color=red, only marks, mark size=2.0pt, mark=*, mark options={solid, red}]
  table[row sep=crcr]{%
1	2.94117647058823\\
2	0\\
3	0\\
4	0\\
5	0\\
6	0\\
7	0\\
8	0\\
9	0\\
10	0\\
};
\addlegendentry{AQC}

\addplot [color=blue, only marks, mark size=2.0pt, mark=*, mark options={solid, blue}]
  table[row sep=crcr]{%
1	0\\
2	0\\
3	0\\
4	0\\
5	0\\
6	0\\
7	0\\
8	0\\
9	0\\
10	25\\
};
\addlegendentry{SA}

\end{axis}
\end{tikzpicture}%%
    \caption{Relative error $ \frac{E_{\text{obtained}}-E_{\text{opt}}}{E_{\text{opt}}}$ of our method in  percentage for the instances of  \cite{burkard77qaplib} (upper left),  \cite{scrqaplib} (1-3), \cite{hadqaplib} (4-8) and \cite{rouqaplib} (9-11)  (upper right), \cite{nugqaplib} (lower left) and \cite{esc16qaplib} (lower right) in QAPLIB. 
    The problem sizes range between $12$ and $30$, of which \cite{nugqaplib} contains the larger ones where we do less well. 
     }
    \label{fig:Qaplibruntime_convergence+Sim}
\end{figure}
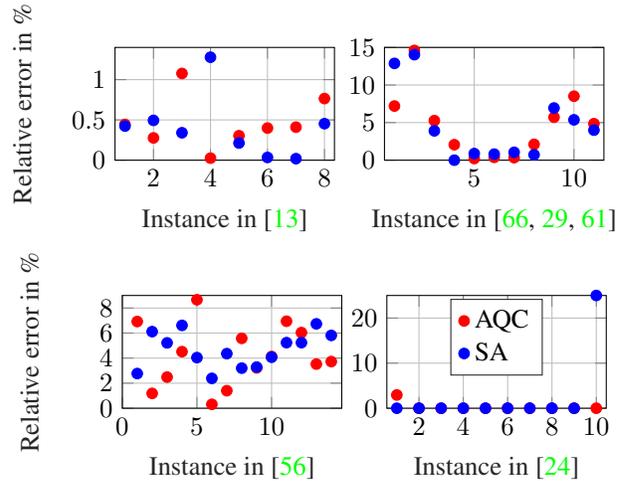

\begin{table*}[]
 \textbf{BO \cite{burkard77qaplib}:}   
    \normalsize
    \begin{center}
    \begin{tabular}{c||cccccccc}
         & a & b & c & d  & e & f & g & h\\
         \hline
        Optimum & 5426670 & 3817852 & 5426795 & 3821225 & 5386879 & 3782044 & 10117172 & 7098658 \\
        Our (Quantum) & 5450757 &  3828405 & 5485230 & 3822190 & 5403238 & 3797120 & 10158673 & 7152966 \\
        Our (Simulated Annealing) & 5449653	&3836716&	5445272&	3870067&	5398333&	3783329	&10119061&	7130826

    \end{tabular} \\
    \end{center}  
    \vspace{0.3cm} 
    \normalsize
    \textbf{ESC16 \cite{esc16qaplib}, \textbf{HAD} \cite{hadqaplib}:}  
    \normalsize
    \begin{center}
    \begin{tabular}{c||cccccccccc||ccccc}
         & a & b & c & d & e & f & g & h & i & j & a & b & c & d & e \\
         \hline
        Optimum & 68 & 292 & 160 & 16 & 28 & 0 & 26 & 996 & 14 & 8 & 1652 & 2724 & 3720 & 5358 & 6922 \\
        Our (Quantum) & 70 & 292 & 160 & 16 & 28 & 0 & 26 & 996 & 14 & 8 & 1652 & 2748 & 3750 & 5358 & 6922 \\
        Our (Simulated Annealing) &68&	292&	160&	16&	28&	0	&26&	996	&14&	10 &1686 &	2730	&3734	&5376	&7068

    \end{tabular} \\
    \end{center}
    \vspace{0.3cm} 
    \normalsize
    \textbf{NUG \cite{nugqaplib}: }  
    \small
    \begin{center}
    \begin{tabular}{c||cccccccccccccc}
         & a & b & c & d & e & f & g & h & i & j & k & l & m & n \\
         \hline
        Optimum & 578 & 1014 & 1610 & 1240 & 1732 & 1930 & 2570 & 2438 & 3596 & 3488 & 3744 & 5234 & 5166 & 6124 \\
        Our (Quantum) & 618 & 1026 & 1650 & 1296 & 1882 & 1936 & 2606 & 2574 & 3712 & 3632 & 4004 & 5550 & 5348 & 6352 \\
        Our (Simulated Annealing) &594	&1076	&1694	&1322	&1802	&1976	&2682	&2516	&3714	&3630	&3940	&5508	&5514	&6480
    \end{tabular} \\
    \end{center}
    \vspace{0.3cm} 
    \normalsize
    \textbf{SCR \cite{scrqaplib}, Rou \cite{rouqaplib}:  }
    \normalsize
    \begin{center}
    \begin{tabular}{c||ccc||ccc}
         & a & b & c & a & b & c  \\
         \hline
        Optimum & 31410 & 51140 & 110030 & 235528 & 354210 & 725522   \\
        Our (Quantum) & 35454 & 58320 & 114322 & 251872 & 373218 & 754506 \\
        Our (Simulated Annealing) &33672	&58606&	115822& 248982	&384354&	760738
    \end{tabular}
    \end{center}
    \normalsize
    \caption{Our solutions for exemplary sets of the QAPLIB dataset with different sizes of quadratic assignment problems. 
    }
    \label{tab:qaplib}
\end{table*}

\section{Failure Case for Individually Optimizing Over the 2-Cycles}\label{sec:toy_example} 

We present an example to proof that optimizing over larger sets of 2-cycles is superior to looking at single 2-cycles separately, as is done in \cite{holzschuh20simanneal}.
We construct a plane 2D-shape where the optimum can be reached with a collection of 2-cycles but each 2-cycle applied individually results in a worse energy starting from a specific permutation. 
Consider the points depicted in Fig.~\ref{fig:Points}. 
If we chose $a,b$ and $\epsilon$, we can compute the energies of permutations with \eqref{eq:isometricW} using Euclidean distances between the points. 
Possible values would be $b=10$, $a = 1$ and $\epsilon = 0.1$.

\begin{figure}
    \centering
    \includegraphics[scale=0.45]{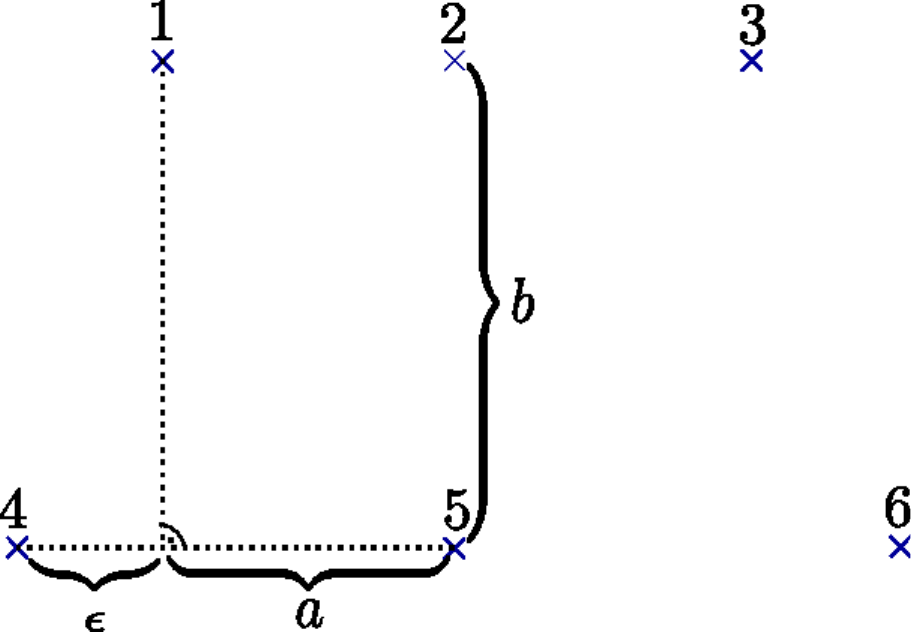}
    \caption{Exemplary shape to show that individually applying 2-cycles does not suffice. Because of the length $\epsilon$ the points are only nearly symmetric along an x-axis. The shape is invariant under the permutation $(13)(46)$.}
    \label{fig:Points}
\end{figure}

The shape is almost (but not exactly) symmetric with respect to mirroring along a shifted x-axis (Permutation (1 4)(2 5)(3 6)). 
In our experiment, the second shape is a copy of the first with permuted vertices, and we want to find the correspondence. 
Let the identity be the optimal solution, and the current permutation is $P_0= (1 4)(2 5)(3 6)$.
Permuting any of the three points on the upper $\{1,2,3\}$ to any of the lower points $\{4,5,6\}$ on the right causes -- despite the correct assignment -- a distortion of the (near-) isometry, such that no such 2-cycle improves the cost function when the assignment of the other points remains unchanged. However, applying all three correct 2-cycles at once, allows to pass to the global optimum with a lower energy. 

This illustrates that using our cyclic $\alpha$-expansion iteration step
for optimizing over multiple 2-cycles at a time can have significant advantages over a sequence of simple single 2-cycle updates.

\section{Calculation of $W_s$ and $\tilde W$}\label{sec:W_calculations} 

Notice that for a given permutation $P$ and a set of cycles $C$, it is possible to get $\tilde W$ without precomputing $W_s$ in roughly the same time as computing $W_s$. 
However, if $W_s$ is precomputed for a subset of vertices, $\tilde W$ can be computed very efficiently for any set of cycles on this subset.
Therefore, we once calculate the expensive $W_s$, and then evaluate several sets of cycles on it to increase the overall efficiency.

\subsection{Calculating {\large $W_s$}} 

If we want to solve a subproblem of \eqref{eq:DASProblem} we assume that all correspondences for indices, that are not optimized, stay fixed. 
Therefore, it is not sufficient to set $W_s$ to a submatrix of $W$, but we have to add the influence of these fixed correspondences. 
Given a set $s_M, s_N \subset \{1, \dots, n \}$ of indices which indicate the subproblem of $W$ (in Q-Match, these are the sets $I_M, I_N$), and a previous permutation $P$, we calculate $W_s$ as follows: 

\begin{align}
    (W_s)_{ikjl} = W_{ikjl} + \sum_{(v_M,v_N) \in V} W_{ikv_Mv_N} +  W_{v_Mv_Njl}. 
\end{align}
Here $V \subset M \times N$ is the set of correspondences indicated by a permutation $P$, with removed all tuples in $V$ which contain entries from $s_M, s_N$, \textit{i.e.,} $(v_M, v_N) \in V$ if $P(v_M) = v_N$ and $v_M \notin v_M, v_N \notin v_N$. 
This results in a $k^2 \times k^2$ matrix where each entry contains the sum of $\mathcal{O}(\vert C \vert )$ basic operations ($W_{ijkl} = \vert d(i,j) - d(k,l) \vert$, where all $d(\cdot, \cdot)$ are precomputed), resulting in $\mathcal{O}(k^4 \vert C \vert)$. 
The computation of each entry can be parallelized. 

\begin{figure}
    \centering
    % This file was created by matlab2tikz.
%
%The latest updates can be retrieved from
%  http://www.mathworks.com/matlabcentral/fileexchange/22022-matlab2tikz-matlab2tikz
%where you can also make suggestions and rate matlab2tikz.
%
\definecolor{mycolor1}{rgb}{0.00000,0.44700,0.74100}%
\definecolor{mycolor2}{rgb}{0.85000,0.32500,0.09800}%
\definecolor{mycolor3}{rgb}{0.92900,0.69400,0.12500}%
\definecolor{mycolor4}{rgb}{0.49400,0.18400,0.55600}%
\begin{tikzpicture}

\begin{axis}[%
width=.5\linewidth,
height=.35\linewidth,
at={(0.797in,0.617in)},
scale only axis,
xmin=0,
xmax=0.2,
ymin=0,
ymax=100,
xmajorgrids,
ymajorgrids,
%every x tick label/.append style={font=\color{black}, font=\footnotesize},
%every y tick label/.append style={font=\color{black}, font=\footnotesize},
axis background/.style={fill=white},
%axis x line*=bottom,
%axis y line*=left,
%x label style={at={(axis description cs:0.5,0.02)},anchor=north},
%y label style={at={(axis description cs:0.1,.5)},rotate=0,anchor=south},
xlabel style={font=\color{white!15!black}},
xlabel={Geodesic error},
x label style = {font=\footnotesize},
x tick label style = {font=\footnotesize},
x label style={at={(axis description cs:0.5,0.05)}, anchor=north},
y label style = {font=\footnotesize},
ylabel style={font=\color{white!15!black}},
y label style = {font=\footnotesize},
ylabel={\% Correspondences},
y label style={at={(axis description cs:0.19,.5)},rotate=0,anchor=south},
%x tick label style={/pgf/number format/fixed},
title style={font=\bfseries},
legend style={at={(1.8,0.0)}, anchor=south east, legend cell align=left, align=left, draw=white!15!black, font=\footnotesize}
]
\addplot [color=mycolor2,line width=1.5pt]
  table[row sep=crcr]{%
0	31\\
0.01	31\\
0.02	33\\
0.03	35.6\\
0.04	41.6\\
0.05	45\\
0.06	50\\
0.07	54.6\\
0.08	59.6\\
0.09	64.8\\
0.1	67.6\\
0.11	71\\
0.12	75.4\\
0.13	79.6\\
0.14	81.8\\
0.15	82.4\\
0.16	84.8\\
0.17	85.6\\
0.18	87\\
0.19	88\\
0.2	89.2\\
0.21	89.8\\
0.22	90.4\\
0.23	91\\
0.24	91.4\\
0.25	91.8\\
0.26	92.2\\
0.27	92.6\\
0.28	92.8\\
0.29	93.2\\
0.3	93.2\\
0.31	93.6\\
0.32	94\\
0.33	94\\
0.34	94.6\\
0.35	94.8\\
0.36	95.4\\
0.37	95.8\\
0.38	95.8\\
0.39	96\\
0.4	96\\
0.41	96\\
0.42	96\\
0.43	96.4\\
0.44	97\\
0.45	97\\
0.46	97\\
0.47	97.2\\
0.48	97.2\\
0.49	97.2\\
0.5	97.2\\
};
\addlegendentry{FMs [50]}

\addplot [color=mycolor1,line width=1.5pt]
  table[row sep=crcr]{%
0	28.1841638445412\\
0	54.6816479400749\\
0.01	56.3295880149813\\
0.02	64.4943820224719\\
0.03	72.5842696629214\\
0.04	79.625468164794\\
0.05	83.8202247191011\\
0.06	87.4906367041199\\
0.07	89.5880149812734\\
0.08	91.310861423221\\
0.09	93.1835205992509\\
0.1	94.1573033707865\\
0.11	94.9063670411985\\
0.12	95.6554307116105\\
0.13	96.3295880149813\\
0.14	96.8539325842697\\
0.15	97.5280898876404\\
0.16	97.9026217228464\\
0.17	98.2022471910112\\
0.18	98.3520599250936\\
0.19	98.8014981273408\\
0.2	99.1760299625468\\
0.21	99.5505617977528\\
0.22	99.8501872659176\\
0.23	100\\
0.24	100\\
0.25	100\\
0.26	100\\
0.27	100\\
0.28	100\\
0.29	100\\
0.3	100\\
0.31	100\\
0.32	100\\
0.33	100\\
0.34	100\\
0.35	100\\
0.36	100\\
0.37	100\\
0.38	100\\
0.39	100\\
0.4	100\\
0.41	100\\
0.42	100\\
0.43	100\\
0.44	100\\
0.45	100\\
0.46	100\\
0.47	100\\
0.48	100\\
0.49	100\\
0.5	100\\
};
\addlegendentry{SA [27]}

\addplot [color=mycolor3,line width=1.5pt]
  table[row sep=crcr]{%
0	87.5166002656043\\
0.01	88.3798140770252\\
0.02	91.0358565737052\\
0.03	93.0942895086321\\
0.04	95.0199203187251\\
0.05	96.2151394422311\\
0.06	96.7463479415671\\
0.07	97.0119521912351\\
0.08	97.4103585657371\\
0.09	98.406374501992\\
0.1	98.738379814077\\
0.11	98.937583001328\\
0.12	99.203187250996\\
0.13	99.33598937583\\
0.14	99.535192563081\\
0.15	99.667994687915\\
0.16	99.800796812749\\
0.17	99.867197875166\\
0.18	99.867197875166\\
0.19	99.867197875166\\
0.2	99.867197875166\\
0.21	99.867197875166\\
0.22	99.933598937583\\
0.23	99.933598937583\\
0.24	99.933598937583\\
0.25	99.933598937583\\
0.26	99.933598937583\\
0.27	100\\
0.28	100\\
0.29	100\\
0.3	100\\
0.31	100\\
0.32	100\\
0.33	100\\
0.34	100\\
0.35	100\\
0.36	100\\
0.37	100\\
0.38	100\\
0.39	100\\
0.4	100\\
0.41	100\\
0.42	100\\
0.43	100\\
0.44	100\\
0.45	100\\
0.46	100\\
0.47	100\\
0.48	100\\
0.49	100\\
0.5	100\\
};
\addlegendentry{ZO [42]}

\addplot [color=red,line width=2.0pt]
  table[row sep=crcr]{%
0	52.6693227091633\\
0.01	53.3067729083665\\
0.02	59.1235059760956\\
0.03	68.00796812749\\
0.04	77.1713147410359\\
0.05	83.1872509960159\\
0.06	86.9322709163347\\
0.07	90.2390438247012\\
0.08	92.9880478087649\\
0.09	94.7410358565737\\
0.1	96.3346613545817\\
0.11	97.0517928286853\\
0.12	97.9282868525896\\
0.13	98.4860557768924\\
0.14	98.7250996015936\\
0.15	98.8446215139442\\
0.16	99.0438247011952\\
0.17	99.1633466135458\\
0.18	99.2430278884462\\
0.19	99.2828685258964\\
0.2	99.3227091633466\\
0.21	99.4422310756972\\
0.22	99.601593625498\\
0.23	99.8406374501992\\
0.24	99.9601593625498\\
0.25	99.9601593625498\\
0.26	99.9601593625498\\
0.27	99.9601593625498\\
0.28	99.9601593625498\\
0.29	99.9601593625498\\
0.3	99.9601593625498\\
0.31	99.9601593625498\\
0.32	99.9601593625498\\
0.33	99.9601593625498\\
0.34	99.9601593625498\\
0.35	99.9601593625498\\
0.36	99.9601593625498\\
0.37	99.9601593625498\\
0.38	99.9601593625498\\
0.39	99.9601593625498\\
0.4	99.9601593625498\\
0.41	99.9601593625498\\
0.42	99.9601593625498\\
0.43	99.9601593625498\\
0.44	99.9601593625498\\
0.45	99.9601593625498\\
0.46	99.9601593625498\\
0.47	99.9601593625498\\
0.48	99.9601593625498\\
0.49	99.9601593625498\\
0.5	99.9601593625498\\
0.51	99.9601593625498\\
0.52	99.9601593625498\\
0.53	99.9601593625498\\
0.54	99.9601593625498\\
0.55	99.9601593625498\\
0.56	99.9601593625498\\
0.57	99.9601593625498\\
0.58	99.9601593625498\\
0.59	99.9601593625498\\
0.6	99.9601593625498\\
0.61	99.9601593625498\\
0.62	99.9601593625498\\
0.63	99.9601593625498\\
0.64	99.9601593625498\\
0.65	100\\
0.66	100\\
0.67	100\\
0.68	100\\
0.69	100\\
0.7	100\\
0.71	100\\
0.72	100\\
0.73	100\\
0.74	100\\
0.75	100\\
0.76	100\\
0.77	100\\
0.78	100\\
0.79	100\\
0.8	100\\
0.81	100\\
0.82	100\\
0.83	100\\
0.84	100\\
0.85	100\\
0.86	100\\
0.87	100\\
0.88	100\\
0.89	100\\
0.9	100\\
0.91	100\\
0.92	100\\
0.93	100\\
0.94	100\\
0.95	100\\
0.96	100\\
0.97	100\\
0.98	100\\
0.99	100\\
1	100\\
};
\addlegendentry{Q-Match}

\addplot [color=red, dashed, line width=2.0pt]
  table[row sep=crcr]{%
0	18.6254980079681\\
0.01	19.4223107569721\\
0.02	28.6852589641434\\
0.03	39.9402390438247\\
0.04	56.4741035856574\\
0.05	67.6294820717131\\
0.06	78.4860557768924\\
0.07	85.1593625498008\\
0.08	89.8406374501992\\
0.09	93.4262948207171\\
0.1	94.7211155378486\\
0.11	95.9163346613546\\
0.12	97.5099601593625\\
0.13	98.207171314741\\
0.14	98.406374501992\\
0.15	98.605577689243\\
0.16	98.9043824701195\\
0.17	99.003984063745\\
0.18	99.1035856573705\\
0.19	99.3027888446215\\
0.2	99.402390438247\\
0.21	99.7011952191235\\
0.22	99.7011952191235\\
0.23	99.9003984063745\\
0.24	100\\
0.25	100\\
0.26	100\\
0.27	100\\
0.28	100\\
0.29	100\\
0.3	100\\
0.31	100\\
0.32	100\\
0.33	100\\
0.34	100\\
0.35	100\\
0.36	100\\
0.37	100\\
0.38	100\\
0.39	100\\
0.4	100\\
0.41	100\\
0.42	100\\
0.43	100\\
0.44	100\\
0.45	100\\
0.46	100\\
0.47	100\\
0.48	100\\
0.49	100\\
0.5	100\\
};
\addlegendentry{Q-Match (bad init)}

\end{axis}

\end{tikzpicture}%
    % This file was created by matlab2tikz.
%
%The latest updates can be retrieved from
%  http://www.mathworks.com/matlabcentral/fileexchange/22022-matlab2tikz-matlab2tikz
%where you can also make suggestions and rate matlab2tikz.
%
\definecolor{mycolor1}{rgb}{0.00000,0.44700,0.74100}%
\definecolor{mycolor2}{rgb}{0.85000,0.32500,0.09800}%
\definecolor{mycolor3}{rgb}{0.92900,0.69400,0.12500}%
\definecolor{mycolor4}{rgb}{0.49400,0.18400,0.55600}%
	\definecolor{ao}{rgb}{0.0, 0.5, 0.0}
\begin{tikzpicture}

\begin{axis}[%
width=.5\linewidth,
height=.35\linewidth,
at={(0.5in,0.1in)},
scale only axis,
xmin=1,
xmax=28,
ymin=0,
ymax=150000,
xmajorgrids,
ymajorgrids,
ymode = log,
%every x tick label/.append style={font=\color{black}, font=\footnotesize},
%every y tick label/.append style={font=\color{black}, font=\footnotesize},
axis background/.style={fill=white},
%axis x line*=bottom,
%axis y line*=left,
%x label style={at={(axis description cs:0.5,0.02)},anchor=north},
%y label style={at={(axis description cs:0.1,.5)},rotate=0,anchor=south},
xlabel style={font=\color{white!15!black}},
xlabel={\small \# Iteration},
x tick label style = {font=\footnotesize},
y tick label style = {font =\footnotesize},
x label style={at={(axis description cs:0.5,0.1)}, anchor=north},
ylabel style={font=\color{white!15!black}},
ylabel={\small Energy (1)},
y label style={at={(axis description cs:0.2,.5)},rotate=0,anchor=south},
%x tick label style={/pgf/number format/fixed},
title style={font=\bfseries},
legend style={at={(1.4,0.48)}, anchor=south east, legend cell align=left, align=left, draw=white!15!black, font=\footnotesize}
]

\addplot [color=mycolor1, line width=2pt]
  table[row sep=crcr]{%
1	28967.7740945007\\
2	20462.8711526334\\
3	17938.8933045271\\
4	15133.3969379716\\
5	12933.9466254587\\
6	11590.26628047\\
7	10428.6223381632\\
8	9700.66895599068\\
9	9200.43498568485\\
10	8782.83810686946\\
11	8478.04747317207\\
12	8149.97998745927\\
13	7964.23038636745\\
14	7853.93685846935\\
15	7741.44210339278\\
16	7585.12901476899\\
17	7506.80118382658\\
18	7506.80118382658\\
};
\addlegendentry{26}

\addplot [color=mycolor3, line width=2.0pt]
  table[row sep=crcr]{%
1	28967.7740945007\\
2	19785.5638124038\\
3	15965.625080292\\
4	12559.5330345494\\
5	10716.6133549037\\
6	9161.67187490522\\
7	8331.93259485467\\
8	7855.8441526553\\
9	7422.46809373534\\
10	7005.97813927963\\
11	6789.73765964623\\
12	6755.55681831424\\
13	6758.7013013385\\
14	6698.82219891283\\
15	6605.60138840007\\
16	6566.61404866082\\
17	6562.50554973589\\
18	6501.10204693944\\
19	6525.49124386912\\
};
\addlegendentry{40}

\addplot [color=mycolor4, line width=2.0pt]
  table[row sep=crcr]{%
1	28967.7740945007\\
2	18455.8661354276\\
3	13636.3958207801\\
4	11043.0417990707\\
5	9409.15456174472\\
6	8950.78079372073\\
7	8002.40795811593\\
8	7449.97648451161\\
9	6921.51806768985\\
10	6651.48589420183\\
11	6574.70936789903\\
12	6441.67271594747\\
13	6285.61209839924\\
14	6082.44440415722\\
15	6166.50557865852\\
};
\addlegendentry{50}

\addplot [color=mycolor1, dashed, line width=2pt]
  table[row sep=crcr]{%
1   147433.96787143423 \\
2   142192.62323329243 \\
3   136253.8536518901 \\
4   130436.40988619089 \\
5   124005.34656277008 \\
6   116808.51441409238 \\
7   109733.12023647716 \\
8   102693.38723233916 \\
9   95659.89715728436 \\
10  88825.67772833773 \\
11  82307.68891305065 \\
12  76711.22083110493 \\
13  69893.72207025846 \\
14  62784.85400370065 \\
15  56765.36960602349 \\
16  51311.35174502351 \\
17  46837.5554608681 \\
18  43151.7755188139 \\
19  40256.72587303916 \\
20  37572.616244433884 \\
21  36131.906384588365 \\
22  34330.30496518413 \\
23  32837.31737364331 \\
24  30876.21452408147 \\
25  29697.72544178123 \\
26  27994.602959265187 \\
27  26337.31466217956 \\
28  24797.301165198758 \\
};
%\addlegendentry{Worst 26}

\addplot [color=mycolor3, dashed, line width=2.0pt]
  table[row sep=crcr]{%
1	147433.967871434\\
2	139285.259171723\\
3	130099.118486156\\
4	119377.254111645\\
5	108436.933593728\\
6	96981.8241760608\\
7	85920.8604686566\\
8	75982.2648127705\\
9	65536.803717107\\
10	55918.6261517324\\
11	47098.0830814349\\
12	39784.9621535889\\
13	34584.8688409008\\
14	30939.7178943774\\
15	28150.1913678377\\
16	25741.782177736\\
17	23904.3033815583\\
18	22061.6577600605\\
19	20470.3228265976\\
20	19243.2516025701\\
21	18247.2588834436\\
22	17567.0634846369\\
23	16873.0515923245\\
24	16218.3357092174\\
25	15874.6668653572\\
26	15454.7181204576\\
27	15159.0309093176\\
28	14969.6691337843\\
};
%\addlegendentry{Worst 40}

\addplot [color=mycolor4, dashed, line width=2.0pt]
  table[row sep=crcr]{%
1	147433.967871434\\
2	136338.351274613\\
3	123248.541818405\\
4	109704.278389896\\
5	96067.5600928574\\
6	82892.9291952542\\
7	68779.8087122125\\
8	55858.9776679827\\
9	44532.2421190974\\
10	37135.0179906324\\
11	32395.782979115\\
12	27833.6023746997\\
13	24561.8250611795\\
14	22182.2027324525\\
15	19950.554820416\\
16	18297.0265883002\\
17	17237.8173072727\\
18	16651.6463981042\\
19	16362.1798500775\\
20	15639.8211553784\\
21	15435.1109718776\\
22	15300.1793926107\\
23	15211.9103510424\\
24	14903.6054690431\\
25	14643.6355651243\\
26	14505.7394595855\\
27	14238.1870571144\\
28	14117.1063499428\\
};
%\addlegendentry{Worst 50}

\addplot [color=gray, dashed, line width = 1.5pt]
  table[row sep=crcr]{%
1	5479.47419680382\\
28	5479.47419680382\\
};
%\addlegendentry{Optimum}

\end{axis}

\end{tikzpicture}%
    \caption{Quantitative experiments comparing our method using the optimal descriptor initialization (solid lines) with the worst descriptor initialization (dashed lines). (Left) Cumulative geodesic error (left) and convergence (right) is shown on the FAUST dataset, otherwise equivalent to Fig.~\ref{fig:faust}. The dashed gray line is the ground truth value.  
    }
    \label{fig:badinit}
\end{figure}
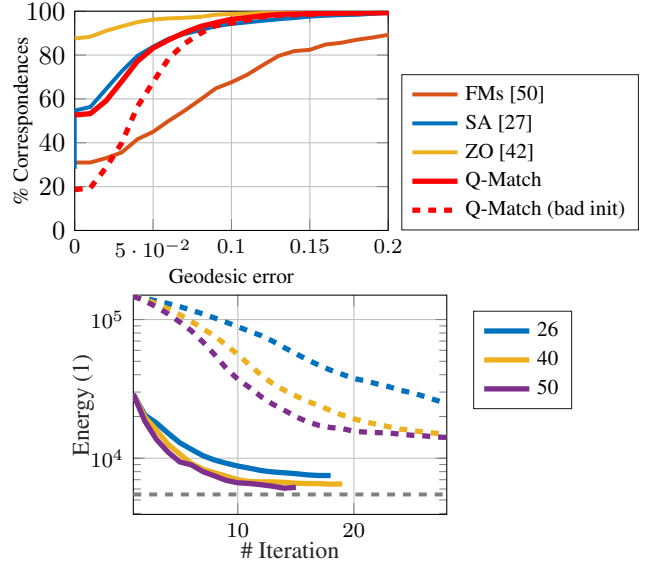

\subsection{Calculating {\large $\tilde W$}} 

Since we converted \eqref{eq:DASProblem} into a
QUBO \eqref{eq:acq}, $W_s$ also needs to be converted into $\tilde W$, \textit{i.e.,} the matrix describing the energy for the chosen combination of cycles. 
Since the cycles are sampled from $s_M, s_N$, $\tilde W$ can be computed from the entries of $W_s$, as we defined in \eqref{eq:Couplings} (and repeated here):
\begin{equation} 
\small
\tilde{W}_{ij} = \begin{cases}
E(C_i,C_j) & \text{if } i\neq j,\\
E(C_i,C_i) + E(C_i,P_0)+E(P_0, C_j) &\text{otherwise.}
\end{cases}\tag{7}
\label{eq:Couplings}
\end{equation} 
$E(C_i, C_j)$ can be calculated as two matrix-vector multiplications \eqref{eq:DASProblem}, however, since the vectors are vectorized permutation matrices with exactly $k$ non-zero entries, they can be written as two sums over $k$ entries.
This is a $m \times m$ matrix. 
Computing every entry separately leads to a complexity of $\mathcal{O}(m^2 k)$. 
In our setting with 2-cycles, $m = \frac{1}{2} k$ holds, therefore, we reach a complexity of $\mathcal{O}(k^3)$. Note that usually $\vert C \vert \gg k$, and calculating $\tilde W$ is a lot more efficient than calculating $W_s$ (see Fig.~\ref{fig:runtime}).

\section{Exact Solutions on QAPLIB}\label{sec:exact_QAPLIB} 

Since the relative error of the QAP is not invariant under shifts of $W$, we also report our results on QAPLIB in Table~\ref{tab:qaplib}. 
Here, it becomes clear again that we reach the optimum for  virtually all instances in ESC16 and HAD. 

\section{Non-Optimal Initialization}\label{sec:badinit}

Due to a sign error in our original experiments on the FAUST dataset, we ran them with the worst possible descriptor based initialization instead of the best. 
As expected the accuracy is not as high and the algorithm converges slower, but Q-Match does not break completely with a very bad initialization.
We see this as an indicator that finding high quality solutions for larger subproblems leads to a very robust pipeline.
See Fig.~\ref{fig:badinit} for the exact results.

\end{document}